%% file: main.tex
\documentclass{article}
\usepackage{iclr2026_conference,times}
\usepackage{graphicx}
\usepackage{amsmath}
\usepackage{amssymb}
\usepackage{amsthm}
\usepackage{booktabs}
\usepackage{algorithm}
\usepackage{algpseudocode}
\usepackage{xcolor}
\usepackage{url}
\usepackage{hyperref}

\graphicspath{{figs/}}

\newtheorem{theorem}{Theorem}
\newtheorem{corollary}{Corollary}

\input{sec/numbers}

\input{figs/co2_macros}

\input{sec/residual_macros}
\input{sec/estimator_macros}

\input{sec/independence_macros}
\input{sec/map_macros}

\input{sec/ci_macros}
\input{sec/iclr_macros}
\input{sec/gpd_macros}

\iclrfinalcopy

\title{Web Scale on a Budget:\\ Tail-Aware Retrieval Evaluation}

\author{Boaz Meivar \\
  Tel Aviv University \\
  \texttt{boazmeivar@mail.tau.ac.il}
  \And
  Shai Avidan \\
  Tel Aviv University \\
  \texttt{avidan@eng.tau.ac.il}}

\begin{document}
\maketitle
\lhead{Preprint}      

\begin{abstract}
A common practice in instance retrieval is to grow the gallery size to improve a benchmark's
discrimination capability. One prominent example is ILIAS, which adds up to 100 million distractor
images. The prevailing assumption is that faithful evaluation demands web scale, and that indexing all
100M images through the backbone is the unavoidable price of a comparable number. That price is steep:
hundreds of GPU-hours and $\sim$200\,GB of descriptors \emph{per model}, paid again for every method or
ablation. We show the web-scale indexing price is largely avoidable: \emph{evaluation itself obeys a
scaling law}. The number of distractors that outrank a positive is Binomial in the gallery size. We
define the per-query retrieval hardness, a gallery-scale agnostic quantity that yields a retrieval
metric at any scale, and demonstrate it can be derived from a small random sample. The underlying sampled-rank machinery is long established in recommender-system and biometric
evaluation, but to our knowledge has never been adapted to visual retrieval. We make the law per-query and estimable, and
validate it at \emph{true} 100M scale on ILIAS and INSTRE, and on the full $759$k-distractor Google
Landmarks v2 pool. Concretely, within a $5\%$ sample (a flat $20\times$ less indexing), every backbone's tail-aware (EVT) estimate has \emph{converged} onto the
large-scale mAP, to within ${\sim}0.01$ (mean absolute error $0.005$), across five backbones, three benchmarks, and two distractor pools (mAP@1000 at 100M on
ILIAS/INSTRE; the official mAP@100 at the full $759$k pool on GLDv2); the recommender-systems count
baseline over-predicts everywhere ($+0.037$ to $+0.068$). This is
an order of magnitude better than naively reporting the metric on the same subset ($+0.08$ to $+0.13$).
We release code and the per-query hardness to facilitate faster, cheaper, and greener retrieval
evaluation.
\end{abstract}

\input{sec/intro}
\input{sec/related}
\input{sec/theory}
\input{sec/protocol}
\input{sec/experiments}

\input{sec/discussion}

\section*{Reproducibility Statement}
All code will be made publicly available.

\section*{Use of Large Language Models}
Large language models were used during the preparation of this paper: to aid and
polish the writing, to draft sections of the paper, to aid retrieval and discovery
of related work, and to aid in proving the paper's mathematical claims. We have
reviewed all AI-assisted work and take responsibility for the final content of this
work, including text, claims, and artifacts produced with the aid of generative AI.

\bibliographystyle{iclr2026_conference}
\bibliography{refs}

\appendix
\input{sec/appendix}
\end{document}

%% file: sec/numbers.tex
\newcommand{\nBackbones}{5}

%% file: figs/co2_macros.tex
\newcommand{\coIndexHours}{556}

\newcommand{\coIndexKg}{89}
\newcommand{\coSweepKg}{622}

\newcommand{\coProtKg}{4.4}

\newcommand{\coReduction}{20}

%% file: sec/estimator_macros.tex
\newcommand{\maeMid}{0.005}

\newcommand{\maeLoguni}{0.038}
\newcommand{\maePower}{0.120}

%% file: sec/map_macros.tex
\newcommand{\trueMap}{0.266}
\newcommand{\mapEvtVal}{0.260}

\newcommand{\mapEvtErr}{0.006}
\newcommand{\mapCountErr}{+0.050}
\newcommand{\mapEvtMaxErr}{0.008}
\newcommand{\maeMap}{0.005}

\newcommand{\instreHundredMap}{0.547}

%% file: sec/iclr_macros.tex
\newcommand{\repK}{20}
\newcommand{\repMae}{0.005}
\newcommand{\repMaeLo}{0.002}
\newcommand{\repMaeHi}{0.008}
\newcommand{\repStd}{0.003}
\newcommand{\repWorst}{0.013}
\newcommand{\repCover}{100/100}

\newcommand{\repPairsPerfect}{9/10}
\newcommand{\repResolveGap}{0.013}
\newcommand{\repContestGap}{0.010}

%% file: sec/gpd_macros.tex
\newcommand{\gpdFail}{0.0\%}
\newcommand{\gpdCapLo}{10\%}
\newcommand{\gpdCapHi}{30\%}
\newcommand{\gpdEndpoint}{0.84}

%% file: sec/intro.tex
\section{Introduction}
\label{sec:intro}

\begin{figure}[t]
\centering
\includegraphics[width=0.62\linewidth]{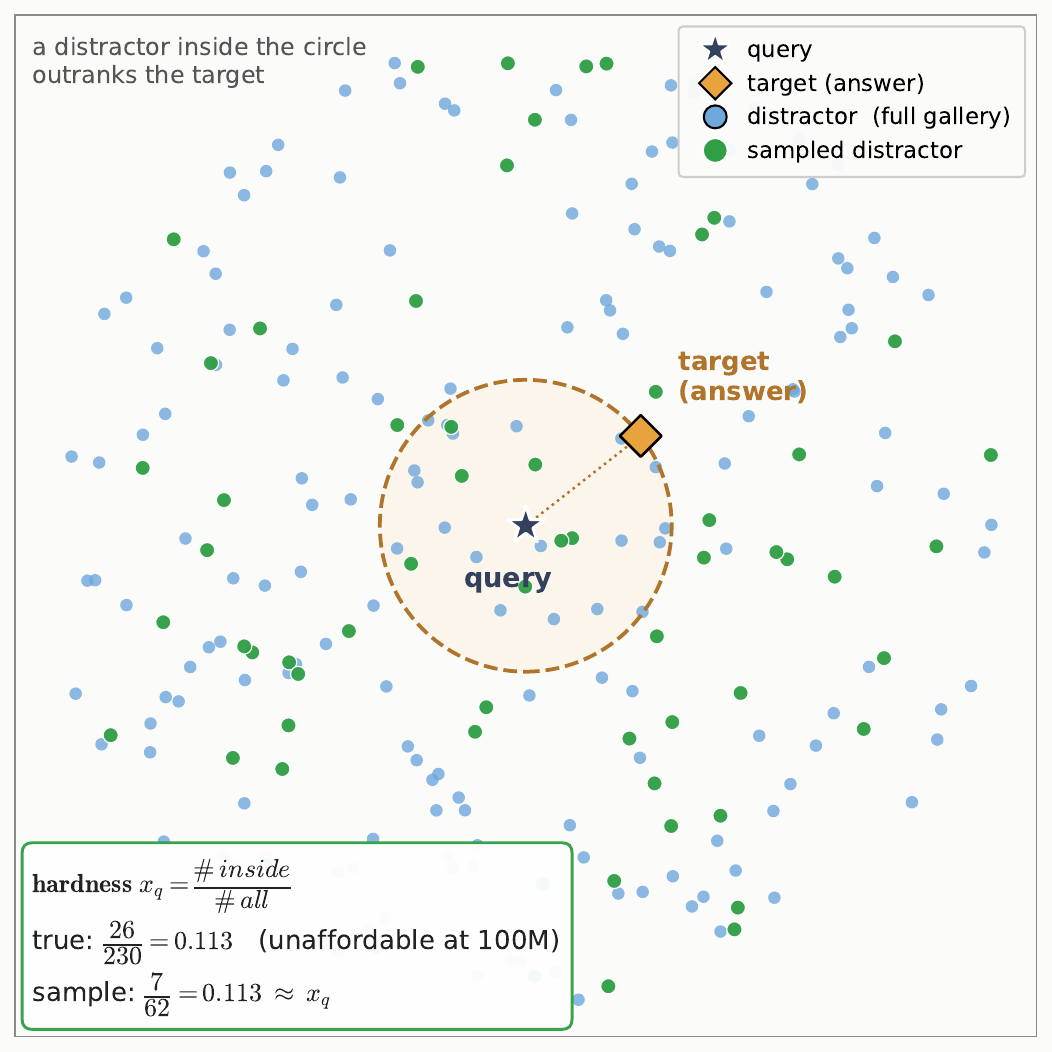}
\caption{\textbf{Why web-scale evaluation is predictable---and how a small sample makes it cheap.}
Retrieval in embedding space for one query ($\star$); its target/answer (the $\blacklozenge$) sits at a
fixed distance (dashed circle). A distractor \emph{inside} the circle scores above the target and outranks
it, so over $N$ distractors the number beating the target is $\mathrm{Binomial}(N,x_q)$ and the target's
rank is $1+\mathrm{Binomial}(N,x_q)$. The per-query hardness $x_q=\#\text{inside}/\#\text{all}$---the
fraction of the distractor distribution inside the circle---is \emph{size-independent}. Measuring it on
the full gallery (every blue point) is the unaffordable 100M indexing; instead we draw a small random
\emph{sample} (green, inside and outside) and estimate $x_q$ from the green points alone
($\hat x_q\approx x_q$). That one number fixes mAP at \emph{any} gallery size $N$, so a $5\%$ sample
predicts the 100M-scale result without indexing 100M distractors. (Illustrative 2-D points; the sample is
drawn larger than $5\%$ for legibility.)}
\label{fig:teaser}
\end{figure}

Progress in instance-level image retrieval is measured by growing the gallery. To separate strong
embeddings from weak ones, benchmarks add large sets of \emph{distractor} images: revisited
Oxford/Paris add a million (R1M)~\citep{radenovic2018revisiting}, Google Landmarks indexes
millions~\citep{weyand2020gldv2}, and ILIAS~\citep{ilias2025} adds one hundred million. This works:
performance keeps dropping as the gallery grows, which is exactly what makes these benchmarks
discriminative. The price, however, is paid in indexing. Encoding 100M images through a
ViT-L backbone costs on the order of \coIndexHours\ GPU-hours and produces $\sim$200\,GB of
descriptors \emph{for one model}. The field treats this as the unavoidable cost of a
meaningful comparison. That cost is paid again for every method, ablation, resolution, or backbone.

Most of that cost is unnecessary. A natural shortcut is to evaluate against a small random subset of
the distractors. This shortcut is distrusted because the resulting number is optimistic and, taken at face
value, can mis-rank methods. So practitioners either pay the full price or retreat to smaller galleries.
These are either ad-hoc random subsets, which are optimistic and not comparable across papers, or
hand-built proxies such as mini-ILIAS~\citep{ilias2025}, which report their own number rather than the
web-scale one.

The shortcut is salvageable because the gap between the small- and large-scale metric is not
mysterious. For a fixed query, the number of distractors that score above the positive is a sum
of independent indicators, hence \emph{Binomial} in the gallery size $N$. The probability the
positive holds rank one is $(1-x)^N$, where the single per-query \emph{hardness} $x$ is the
fraction of the distractor similarity distribution above the positive. This $x$ is
size-independent. It is a property of the model and the distractor source, and it does not depend
on how many distractors one draws. From it, the metric at \emph{any} scale follows in closed form.
The corollary that matters in practice: a small random sample suffices to estimate $x$, and the
large-scale number can be predicted rather than measured.

Why has retrieval not been doing this? The sampled-rank machinery is mature, but it has lived
elsewhere: classical IR~\citep{madigan2003retrieval,hawking2003collection}, recommender
systems~\citep{krichene2020sampled,jin2021estimating}, and
biometrics~\citep{nistfrvt,scheirer2011meta,grother2004models,slavutsky2021predicting,rocher2025scaling}.
That machinery has been used with a single \emph{assumed} score distribution and no sample-based
estimation (e.g.\ \citet{madigan2003retrieval} analyze one shared relevant/irrelevant score pair); what
has been missing for visual retrieval is estimating, \emph{per query and from a small sample}, these
\emph{extremely rare} exceedance probabilities and validating the resulting
metric against a real system at web scale. \textbf{We turn that machinery into a practical, validated
evaluation framework for web-scale instance retrieval}: per-query hardness estimated from a small sample
with a tail-aware estimator, validated at genuine 100M scale, and turned into a low-cost protocol.

\paragraph{Contributions.}
\begin{enumerate}\itemsep1pt \parskip0pt \topsep0pt
\item We bring the distractor-scaling law to instance retrieval and derive its retrieval
form. The positive's rank is $1+\mathrm{Binomial}(N,x_q)$ with per-query hardness
$x_q=1-F_q(s^\star)$, so the reported metric $\mathrm{mAP@}k(N)$ follows in closed form. This is an
\emph{exact identity} in the realized per-query exceedance fractions $\{K_{q,j}/M\}$, since the gallery's
ranks are deterministic. A small sample turns it into a \emph{predictor} through the population hardness,
with error we quantify for the smooth term and evaluate empirically at the top-$k$ boundary
(App.~\ref{app:bound}). The mechanism is broad, multi-decade (approximately power-law) hardness
$\Rightarrow$ near-logarithmic, non-saturating decay (log-uniform is the idealized limit).
\item \textbf{A low-cost web-scale evaluation.} From an affordable $1/20$ distractor subset a tail-aware
(EVT) estimate predicts the web-scale metric. The estimate recovers the large-scale metric to mean
absolute error $0.005$ on ILIAS and ${\le}0.008$ on INSTRE and GLDv2, beating curve-fit and
recommender-systems baselines.
\item \textbf{Validation at true 100M scale.} On ILIAS and INSTRE the estimator reproduces the official
mAP over all 100M YFCC distractors; on Google Landmarks v2 (a different domain with a separate $759$k
pool) it recovers the official mAP@100. It holds across all five tested backbones and every benchmark.
\item \textbf{A public release.} We release code and per-query hardness so results on these benchmarks
can be compared at any gallery size without re-indexing.
\end{enumerate}

%% file: sec/related.tex
\section{Related Work}
\label{sec:related}

We organize prior work by which \emph{axis} of cost it addresses. Training-side scaling laws
predict error from training data/compute. Evaluation-side work either subsamples the
\emph{query} axis or estimates metrics from sampled \emph{judgments}. We occupy a different cell:
predicting the metric along the \emph{distractor/gallery} axis.

\paragraph{Distractor-scaling benchmarks (the cost we remove).}
Revisited Oxford/Paris add the R1M hard-distractor set~\citep{radenovic2018revisiting}, Google
Landmarks indexes millions~\citep{weyand2020gldv2}, and ILIAS scales to 100M
distractors~\citep{ilias2025}. These works \emph{measure} how accuracy falls as the gallery grows
and report a few fixed operating points. None make the metric-versus-scale curve estimable from a
sample or use it to cut evaluation cost. We do. The ILIAS authors do release \emph{mini-ILIAS}, a
VLM-mined 5M-distractor proxy that retains the hardest distractors as a cheaper stand-in for the full
pool~\citep{ilias2025}. It reports a number on a deliberately harder subset, whereas we \emph{predict}
the true 100M number from a uniform random sample. A hard-mined subset is non-uniform by design, so it
does not yield the size-independent hardness our estimator needs.

\paragraph{Collection-size analysis in classical IR (closest in spirit).}
The relationship between collection size and retrieval effectiveness was modeled analytically two
decades ago. \citet{madigan2003retrieval} derive the distribution of irrelevant documents
outscoring the relevant ones (``contamination at $K$''). This is a Binomial/Poisson sampled-rank
model whose $K{=}1$ case is our Theorem~\ref{thm:main}. Theirs assumes a \emph{single} score
distribution shared by all queries, with no sample-based estimation (which they pose as future work).
The same authors later applied extreme-value theory to the growing-collection asymptotics of retrieval
effectiveness~\citep{madigan2006evt}, still for a single shared score distribution and without
sample-based estimation or 100M-scale validation.
\citet{hawking2003collection} empirically observe that mAP rankings are largely preserved between a
sample and the full collection. We supply what this line lacked: the heterogeneous per-query law, a
sample-based tail-aware estimator, and validation at 100M.

\paragraph{Sampled / low-cost retrieval evaluation (judgment axis).}
A long IR tradition estimates metrics from incomplete or sampled relevance judgments:
bpref~\citep{buckley2004incomplete}, statAP/infAP~\citep{yilmaz2008statap,yilmaz2006infap}, and
minimal test collections~\citep{carterette2006minimal}. In recommendation, Krichene and
Rendle~\citep{krichene2020sampled} show sampled top-$k$ metrics are biased and collapse toward
AUC, and propose corrected estimators (an unbiased rank estimator and a bias--variance variant); a
subsequent line learns the global rank \emph{distribution} by maximum
likelihood~\citep{jin2021estimating,li2023reliable}. These estimate an \emph{aggregate} rank
distribution (Recall/NDCG@$k$ over a catalogue) and cure the sampling ``blind spot'' by sampling
\emph{more} items, not by extrapolating the unobserved tail. Our count baseline
(Sec.~\ref{sec:experiments}) is the Krichene--Rendle \emph{unbiased} estimator specialized to the top
rank; because mAP depends on the per-positive hard tail (invisible below $1/S$) it fails to recover the
large-$N$ value, and enlarging the sample to resolve that tail is exactly the web-scale indexing we
avoid---which is why we instead extrapolate it.

\paragraph{Efficient benchmarking (query axis).}
tinyBenchmarks~\citep{polo2024tiny} and Anchor Points~\citep{vivek2024anchor} rank models from a
handful of test items via item-response theory or clustering. These reduce the number of
\emph{queries}. We reduce the number of \emph{distractors}, an orthogonal and, at web scale,
dominant cost.

\paragraph{Identification scaling in biometrics.}
Identification rates scale with gallery size~\citep{nistfrvt}, and the MegaFace benchmark measures
face-identification accuracy as the gallery grows to a million distractors~\citep{megaface2016}.
Meta-recognition fits an extreme-value tail to match scores~\citep{scheirer2011meta}. The same generalized-Pareto
peaks-over-threshold object appears in IR for list truncation~\citep{bahri2020surprise}, where the
tail exceedance probability is exactly our $x_q$. The distinction is what the fit is \emph{for}:
prior work uses it at a fixed scale, whereas we use it as an \emph{extrapolation engine} to a far
larger gallery. Most related, \citet{rocher2025scaling} forecast large-population identification
correctness from small experiments with a Pitman--Yor law. We differ in task (instance retrieval
vs.\ closed-set identification), model, and scale (100M images). We read their result as concurrent
evidence that evaluation-side scaling laws are learnable.

\paragraph{Training-side scaling laws (contrast).}
Power-law generalization in training data and model size~\citep{hestness2017scaling,kaplan2020scaling},
including for dense retrieval~\citep{fang2024scaling}, brought predictive discipline to \emph{training}
budgets. We bring the same discipline to the \emph{evaluation} budget, treating evaluation scaling as a
first-class problem in its own right, analogous to training scaling laws. A parallel line extrapolates
multi-class \emph{classification} accuracy to larger label sets from a subset of
classes~\citep{zheng2018extrapolating}, the class-axis analogue of our gallery-axis extrapolation.

%% file: sec/theory.tex
\section{A Scaling Law for Retrieval Under Distractors}
\label{sec:theory}

\paragraph{Setup.}
Fix a query $q$ and rank all gallery items by descending cosine similarity to $q$. Let $s^\star$
be the similarity of the query's best positive (the maximum over its relevant images). Let
$d_1,\dots,d_N$ be $N$ distractors drawn i.i.d.\ from the distractor source, with similarities
$S_i=\mathrm{sim}(q,d_i)$ and common CDF $F_q$. Define the \emph{per-query hardness}
\begin{equation}
\label{eq:hardness}
x_q \;:=\; \Pr\!\big[\mathrm{sim}(q,d)>s^\star\big] \;=\; 1-F_q(s^\star),
\end{equation}
the fraction of the distractor similarity distribution lying above the best positive. The hardness
$x_q$ is a \emph{population} quantity: a property of the model and the distractor \emph{source}, not
of the number $N$ of distractors drawn. It is distinct from the \emph{realized} exceedance fraction in
a particular gallery. We make that distinction precise below, and it governs when the law is exact.

\begin{theorem}[Distractor scaling]\footnote{This is the instance-retrieval form of a
sampled-rank model that has been derived independently in classical
IR~\citep{madigan2003retrieval}, recommender systems~\citep{krichene2020sampled}, and
biometrics~\citep{nistfrvt}. Madigan et al.'s ``contamination at $K$'' is the single-positive case. Our
contribution is not the probabilistic fact but the \emph{heterogeneous} (per-query) law that fits
data where the single-distribution model fails, together with the sample-based estimator, the 100M
validation, and the protocol.}
\label{thm:main}
Let $K=\#\{i:S_i>s^\star\}$ be the number of distractors outranking the positive. Then
$K\sim\mathrm{Binomial}(N,x_q)$, the positive sits at rank $1+K$, and
\begin{equation}
\begin{gathered}
\Pr[\text{positive ranked first}] = (1-x_q)^N,\\
\mathbb{E}[\mathrm{rank}] = 1+N\,x_q .
\end{gathered}
\end{equation}
For a query with positives $1,\dots,n_q$ at descending similarity, let
$x_{q,j}=\Pr[\mathrm{sim}(q,d)>s_j]$ be the per-positive (population) hardness. Collecting these,
define the \emph{hardness-to-metric map}
\begin{equation}
\label{eq:aplaw}
\begin{aligned}
R_k\big(\{u_{q,j}\},N\big) &=\frac{1}{|Q|}\sum_q\frac{1}{\min(k,n_q)}\\
&\quad\times\sum_{j=1}^{n_q}\mathbf{1}\{r_j\le k\}\,\frac{j}{r_j},\\
r_j&=j+N\,u_{q,j}.
\end{aligned}
\end{equation}
As shown next (and in App.~\ref{app:bound}), $R_k$ is an \emph{exact identity} for the realized
gallery when fed its realized exceedance fractions, and a \emph{predictor} when fed the population
hardness $\{x_{q,j}\}$---one whose smooth-precision error we quantify and whose top-$k$ truncation error we
evaluate empirically (App.~\ref{app:bound}).
Each summand $j/r_j$ is non-increasing in $N$, so $\mathrm{mAP@}k(N)$ is non-increasing: a larger
gallery never raises the metric (by coupling, every $K$ only grows as distractors are added). In the
rare-event regime ($N$ large, $x_q$ small, $Nx_q=O(1)$), $K\to\mathrm{Poisson}(Nx_q)$ and the
top-rank survival $\Pr[\text{positive ranked first}]\approx e^{-Nx_q}$. A positive holds the top-$k$
until $N\approx k/x_{q,j}$.
\end{theorem}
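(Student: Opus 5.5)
The plan is to condition on the query $q$ and on its positives, so that $s^\star$ (and each $s_j$) is a fixed number. Only the distractor draws are then random. For each $i$, set $B_i=\mathbf{1}\{S_i>s^\star\}$. Because the $d_i$ are i.i.d.\ from the source, the $B_i$ are i.i.d.\ Bernoulli with success probability $\Pr[S_i>s^\star]=1-F_q(s^\star)=x_q$ by \eqref{eq:hardness}, so $K=\sum_i B_i\sim\mathrm{Binomial}(N,x_q)$. Ties at $s^\star$ are handled by the strict inequality, or are simply assumed to have probability zero for a continuous $F_q$.

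The positive's rank is $1+K$: the items ranked above it are exactly the distractors with larger similarity, since the best positive has no other positive above it. The two closed forms then follow directly. First, $\Pr[\text{first}]=\Pr[K=0]=(1-x_q)^N$. Second, $\mathbb{E}[\mathrm{rank}]=1+Nx_q$ by linearity of expectation.

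For positive $j$, the same argument gives a rank of $j+K_{q,j}$, where $K_{q,j}\sim\mathrm{Binomial}(N,x_{q,j})$. The $j-1$ positives above it are fixed, because positives are ordered by descending similarity.

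Next I would check that \eqref{eq:aplaw} is an exact identity. Write $u_{q,j}=K_{q,j}/N$ for the realized fraction. Then $r_j=j+Nu_{q,j}$ is exactly the realized rank of positive $j$. In that case $j/r_j$ is the precision at that rank, and $\mathbf{1}\{r_j\le k\}$ together with the normalization $\min(k,n_q)$ is the definition of AP@$k$. Averaging over $q$ then gives mAP@$k$ exactly, with no randomness left.

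When we substitute the population value $u_{q,j}=x_{q,j}$ instead, $R_k$ becomes a plug-in predictor, since $\mathbb{E}K_{q,j}=Nx_{q,j}$. I would only state this here and defer the error analysis to App.~\ref{app:bound}.

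Monotonicity is proved by coupling. Realize the galleries of sizes $N<N'$ as prefixes of one i.i.d.\ sequence. Then each $K_{q,j}(N)\le K_{q,j}(N')$, so each $r_j$ weakly increases. This means $j/r_j$ weakly decreases and the indicator $\mathbf{1}\{r_j\le k\}$ can only switch from 1 to 0. Every summand is nonnegative and non-increasing, so mAP@$k$ is non-increasing in $N$.

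In the plug-in form the same holds deterministically, because $r_j=j+Nx_{q,j}$ is increasing in $N$.

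For the rare-event statement I would invoke the Poisson limit theorem (law of rare events). If $Nx_q\to\lambda$, then $\mathrm{Binomial}(N,x_q)\Rightarrow\mathrm{Poisson}(\lambda)$. A direct check is $(1-x_q)^N=\exp(N\log(1-x_q))=\exp(-Nx_q+O(Nx_q^2))\approx e^{-Nx_q}$, since $Nx_q^2=O(x_q)\to0$.

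The threshold claim comes from the plug-in rank. The condition $r_j=j+Nx_{q,j}\le k$ holds roughly while $N\le (k-j)/x_{q,j}\approx k/x_{q,j}$ for $j\ll k$, which is the same as $\mathbb{E}K\approx k$.

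The main obstacle is precision about what "exact" means. The identity holds only for the realized fractions, while the population version is not an expectation identity. The map $K\mapsto \mathbf{1}\{j+K\le k\}\,j/(j+K)$ is nonlinear, so by Jensen-type gaps $\mathbb{E}[R_k]\ne R_k(\{x_{q,j}\})$. I would therefore state that part carefully as a predictor, not an identity, and leave its bound to the appendix.
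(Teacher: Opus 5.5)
Your proposal is correct and follows essentially the same route as the paper's proof. That route is: i.i.d.\ Bernoulli exceedance indicators giving $K\sim\mathrm{Binomial}(N,x_q)$ and rank $1+K$; linearity for the expected rank $j+Nx_{q,j}$; the realized-fraction versus population-hardness split for identity versus predictor; the prefix coupling $K_{N+1}=K_N+B_{N+1}$ for monotonicity; and the standard Binomial-to-Poisson limit. Your version is somewhat more explicit than the paper's on ties, on the truncation indicator $\mathbf{1}\{r_j\le k\}$ in the monotonicity step, and on deriving the $N\approx k/x_{q,j}$ threshold from $j+Nx_{q,j}\le k$.
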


\begin{proof}
The indicators $B_i=\mathbf{1}\{S_i>s^\star\}$ are i.i.d.\ $\mathrm{Bernoulli}(x_q)$, so their sum
$K$ is $\mathrm{Binomial}(N,x_q)$. The positive is at rank $1+K$, so it is ranked first iff $K{=}0$,
with probability $(1-x_q)^N$. By linearity of expectation (which requires \emph{no} independence),
$\mathbb{E}[\mathrm{rank}]=1+\mathbb{E}[K]=1+\sum_i\mathbb{E}[B_i]=1+Nx_q$, and more
generally the $j$-th positive (already preceded by $j{-}1$ better positives) sits at expected rank
$j+N\,x_{q,j}$. The map~\eqref{eq:aplaw} assembles these per-positive ranks into the metric. Whether
it equals the realized mAP exactly or only predicts it depends on which hardness it is fed, which we
make precise after the proof. The Poisson limit is the standard convergence
$\mathrm{Binomial}(N,x_q)\to\mathrm{Poisson}(Nx_q)$. Monotonicity follows by coupling
$K_{N+1}=K_N+B_{N+1}\ge K_N$, so each rank $r_j$ only grows and each $j/r_j$ only shrinks.
\end{proof}

\paragraph{Exact identity vs.\ prediction.}
The map~\eqref{eq:aplaw} is used two ways, and separating them is what makes ``exact law'' precise.
\emph{(i) Identity:} a benchmark pool is one fixed set, and the $j$-th positive's rank is the
deterministic $r_j=j+K_{q,j}$, where $K_{q,j}$ counts the distractors outscoring it. With the
\emph{realized fraction} $\bar x_{q,j}:=K_{q,j}/N$ we have $j+N\bar x_{q,j}=r_j$, so
$R_k(\{\bar x_{q,j}\},N)$ is that gallery's mAP \emph{exactly}: truncation included, no expectation, no
linearization (App.~\ref{app:bound}). \emph{(ii) Prediction:} the population hardness $x_{q,j}$ of
Eq.~\eqref{eq:hardness} differs. Since $K_{q,j}\sim\mathrm{Binomial}(N,x_{q,j})$, $\bar x_{q,j}$
concentrates on it with standard deviation $\sqrt{x_{q,j}(1{-}x_{q,j})/N}$, so feeding $\{x_{q,j}\}$
\emph{predicts} the realized mAP. That prediction error is empirically $\le0.006$ at 100M. The
$\mathbb{E}[1/r]$-vs-$1/\mathbb{E}[r]$ gap concerns only the never-reported pool-resampling average
$\mathbb{E}[\mathrm{AP}]$, which we compute exactly (the per-positive Binomial sum
$\sum_m\tfrac{j}{j+m}\Pr[K_{q,j}{=}m]$) and find sits ${\approx}0.005$ \emph{above} the realized value:
since a benchmark reports the \emph{realized} number, targeting it with the expected-rank map---as we
do---is more accurate than targeting $\mathbb{E}[\mathrm{AP}]$, not less. Our estimator plugs in a
size-$S$ sample estimate
$\hat x_{q,j}$, splitting the error into estimation (sample$\to$population) and concentration
(population$\to$realized), both quantified in App.~\ref{app:bound}.

\paragraph{Remarks.}
The expected-rank result $1+Nx_q$ holds under \emph{arbitrary} dependence between distractors. The
exact $(1-x_q)^N$/Poisson forms assume independence, which holds for i.i.d.\ random distractors and
which we find empirically tight at 100M (Sec.~\ref{sec:experiments}). The theorem models $N$ i.i.d.\
draws from a distractor \emph{source}. Evaluating on a fixed release (e.g.\ the official 100M set) is
one realization, for which the law is the with-replacement approximation of the finite-pool
(hypergeometric) outcome. The two coincide at sampling fractions $\ll1$, and the full-pool value is the
$N\!\to\!\text{pool-size}$ limit. Everything hinges on the single per-query scalar: the law is an exact
identity at the realized exceedance fractions and a concentration-tight predictor at the population
hardness (above), and all of the empirical content lives in how that hardness is distributed across
queries.

\paragraph{Logarithmic decay under log-uniform hardness.}
\begin{corollary}[Log-uniform hardness $\Rightarrow$ logarithmic decay]
\label{cor:loguniform}
If the hardness density is log-uniform, $g(x)\propto 1/x$ on $[a,b]$, then the aggregate top-rank
survival
\begin{equation}
\begin{aligned}
\mathbb{E}_q\big[(1-x_q)^N\big]
&=\int(1-x)^N g(x)\,dx\\
&\approx\frac{E_1(Na)-E_1(Nb)}{\ln(b/a)} \approx A-B\ln N
\end{aligned}
\end{equation}
for $1/b\ll N\ll 1/a$, where $E_1$ is the exponential integral. Each $j/r_j$ term
in~\eqref{eq:aplaw} decays the same way, so $\mathrm{mAP@}k(N)$ inherits the logarithmic profile.
Accuracy therefore decays \emph{logarithmically and never plateaus} over the relevant range.
\end{corollary}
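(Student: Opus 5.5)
The plan is to compute the integral directly after replacing $(1-x)^N$ by $e^{-Nx}$, and then read off the logarithm from the small-argument expansion of $E_1$. By Theorem~\ref{thm:main}, the aggregate top-rank survival is $\mathbb{E}_q[(1-x_q)^N]=\int_a^b(1-x)^N g(x)\,dx$, with $g(x)=1/(x\ln(b/a))$ normalized on $[a,b]$.

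\emph{Step 1 (Poissonization).} On $[a,b]$ with $b$ small, use $(1-x)^N=e^{N\ln(1-x)}$ and $-x-x^2/(1-x)\le\ln(1-x)\le -x$. This gives $0\le e^{-Nx}-(1-x)^N\le e^{-Nx}\bigl(1-e^{-Nx^2/(1-x)}\bigr)\le e^{-Nx}Nx^2/(1-x)$. Integrated against $g$, the error is at most $\frac{N}{\ln(b/a)(1-b)}\int_a^b x e^{-Nx}dx\le \frac{b}{(1-b)\ln(b/a)}$. It is therefore negligible when $b\ll1$, which is the rare-event regime of Theorem~\ref{thm:main}.

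\emph{Step 2 (exponential integral).} Substitute $t=Nx$:
\[
\int_a^b \frac{e^{-Nx}}{x\ln(b/a)}\,dx=\frac{1}{\ln(b/a)}\int_{Na}^{Nb}\frac{e^{-t}}{t}\,dt=\frac{E_1(Na)-E_1(Nb)}{\ln(b/a)},
\]
which is the first approximation claimed in the corollary.

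\emph{Step 3 (logarithmic regime).} For $1/b\ll N\ll 1/a$ we have $Na\ll1$ and $Nb\gg1$.
\begin{itemize}
\item For the first term, use $E_1(z)=-\gamma-\ln z+\sum_{k\ge1}(-1)^{k+1}z^k/(k\cdot k!)$, so $E_1(Na)=-\gamma-\ln a-\ln N+O(Na)$.
\item For the second term, use $E_1(z)\le e^{-z}/z$, so $E_1(Nb)\le e^{-Nb}/(Nb)$, which is exponentially small.
\end{itemize}
Together these give
\[
\mathbb{E}_q[(1-x_q)^N]\approx\frac{-\gamma-\ln a}{\ln(b/a)}-\frac{1}{\ln(b/a)}\ln N,
\]
that is, $A-B\ln N$ with $A=(-\gamma-\ln a)/\ln(b/a)$ and $B=1/\ln(b/a)>0$. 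The error is $O(Na+e^{-Nb}/(Nb)+b)/\ln(b/a)$.

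\emph{Step 4 (the $j/r_j$ terms and mAP).} I would argue heuristically, in keeping with the corollary's wording. A summand $\mathbf{1}\{r_j\le k\}\,j/(j+Nx_{q,j})$ is a function of $Nx_{q,j}$ alone. It is close to $1$ for $Nx\ll1$ and close to $0$ (or truncated) for $Nx\gg k$, so it acts like a smoothed step at $x\approx c/N$. Averaging such a function $\phi(Nx)$ against $dx/x$ gives $\frac{1}{\ln(b/a)}\int_{Na}^{Nb}\phi(t)\,dt/t$. Whenever $\phi$ is bounded, tends to a constant at $0$, and decays fast enough at $\infty$, the same split as in Step 3 yields $A'-B\ln N$ with the same slope $B$. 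Because the slope does not depend on $\phi$, mAP@$k$ inherits the profile.

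The main obstacle is this last step. The $1/t$ tail of $j/(j+t)$ without truncation gives an integrable-at-infinity $\phi(t)/t\sim 1/t^2$, which is fine, but the constant $A'$ depends on $k$. Care is also needed that the per-positive hardness is log-uniform jointly; I would assume this explicitly and otherwise keep Step 4 as the stated heuristic.
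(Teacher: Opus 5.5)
Your proposal is correct and follows the route the paper's statement itself implies: Poissonize $(1-x)^N\approx e^{-Nx}$ via the rare-event regime of Theorem~\ref{thm:main}, substitute $t=Nx$ to obtain $E_1(Na)-E_1(Nb)$, then use the small- and large-argument behaviour of $E_1$. The paper offers no argument beyond the statement (none at all for the $j/r_j$ extension), so your explicit error bounds and your reduction of each term to $\frac{1}{\ln(b/a)}\int_{Na}^{Nb}\phi(t)\,dt/t$ go beyond it; two small refinements are that the $j$-th term's logarithmic window is $k/b\lesssim N\ll j/a$ rather than $1/b\ll N\ll 1/a$, and that only marginal (not joint) log-uniformity of the per-positive hardnesses is needed, since mAP is linear in the individual summands.
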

This is an empirical regularity of the datasets we study, not a universal law. Across all of them the
per-query hardness is \emph{approximately} a power law $x^{-\alpha}$ with $\alpha\approx1.2$
($1.16$--$1.32$ across our backbones and benchmarks; Sec.~\ref{sec:experiments}, Fig.~\ref{fig:law}); the
log-uniform case above ($\alpha{=}1$) is its idealized limit, and the mild power law induces the observed
near-logarithmic decay: mAP falls by a roughly constant amount per decade of distractors all the way to
100M, with a large fraction of mass near the
detection floor $x\!\approx\!1/N$. It is the precise sense in which evaluating on a small subset is
optimistic: the subset has not yet revealed the slow bleed of the hardness tail.

%% file: sec/protocol.tex
\section{From Law to Protocol}
\label{sec:protocol}

Theorem~\ref{thm:main} says a model's behavior at any gallery size is determined by the
size-independent hardness distribution $\{x_q\}$. Evaluation therefore need not scale with the
distractor set: estimate $\{x_q\}$ from a small random sample and report the metric (mAP@$k$) at a
standardized target $N^\star$ via Eq.~\eqref{eq:aplaw}, so numbers obtained with different
distractor budgets become directly comparable.

\paragraph{Estimating hardness from a size-$S$ sample.}
Draw $S$ random distractors and, per query, record $s^\star$ and the sample similarities. The
difficulty is that the queries which dominate large-$N$ accuracy have hardness $x_q\!\lesssim\!1/S$
and are \emph{invisible} in the sample. We consider:
\emph{(i) Count} $\hat{x}_q=\tfrac1S\#\{S_i>s^\star\}$ -- unbiased but quantized at $1/S$; it
returns $0$ for the invisible tail and so predicts those queries survive forever. This is exactly
the Krichene--Rendle \emph{unbiased} rank estimator specialized to the top rank, and it is over-optimistic.
\emph{(ii) Tail extrapolation via extreme-value theory (EVT)}~\citep{scheirer2011meta,bahri2020surprise}
-- for these queries no sampled distractor reaches $s^\star$, so counting is blind to them. Extreme-value
theory states that above a high threshold the far tail of essentially any continuous score distribution
converges to a universal two-parameter shape, a generalized Pareto. We fit that shape to the
highest-scoring distractors we \emph{do} observe and extend it out to $s^\star$, recovering a small
nonzero hardness. The move is the same one used to estimate a never-observed ``100-year flood'' from the
tail of recorded flood heights (full procedure in App.~\ref{app:evt}).
\emph{(iii) Survival-curve fit} -- fit the mechanism of Cor.~\ref{cor:loguniform}, the log-uniform
survival model $S_1(N)=w_{\mathrm{imm}}+w_c\,[E_1(Na)-E_1(Nb)]/\ln(b/a)$ for the top-rank survival,
to the measured survival curve $S_1(N{\le}S)$ and extrapolate.

\paragraph{The estimate and its interval.}
The two families fail on the \emph{same} object, the hardness mass below the sample floor $1/S$, in
opposite one-sided ways. Counting and curve fits assign that mass zero, so they can only over-predict,
with positive error in every one of our five backbone cases. Tail extrapolation instead
assigns the mass a fitted value and recovers it. Because mAP integrates marginal positives whose
hardness lives precisely in that unobserved tail, we take the tail-aware (EVT) estimate
$\hat{R}_{\mathrm{tail}}$ as the point estimate. It is accurate across three benchmarks, two
distractor pools, and five backbones. The count estimate $\hat{R}_{\mathrm{count}}$ zeroes the unresolved
sub-$1/S$ tail mass, which can only \emph{raise} the predicted metric---a one-sided bias on that mass,
not a deterministic per-sample guarantee; the tail estimate has no matching bias in the other direction,
so we treat the gap $B=\hat{R}_{\mathrm{count}}-\hat{R}_{\mathrm{tail}}$ as an
\emph{empirical diagnostic}---computable from the sample and shrinking as $S$ grows---not a coverage
interval. Empirically the truth sits at the tail edge in every case we measure, but a misspecified tail
could in principle err either way.
Alongside it we report a query-bootstrap confidence interval that captures sampling noise. Judged
against this interval, the estimate enters the metric's own noise from a small sample
(Sec.~\ref{sec:experiments}). An error analysis (App.~\ref{app:bound}) shows the AP law is exact in
the hardnesses for the realized gallery, so the error is \emph{entirely} the sampling term. That term
concentrates for sample-resolved positives and is governed by extreme-value consistency for the
unresolved tail.

The recipe, summarized in Alg.~\ref{alg:protocol}: sample $S$, fit the per-query EVT tail, report
$\frac1{|Q|}\sum_q\mathrm{AP}_q(N^\star;\hat{x}_{q,\cdot})$ at the target scale via
Eq.~\eqref{eq:aplaw} with the empirical count/tail diagnostic and a query-bootstrap confidence interval.

\begin{algorithm}[t]
\footnotesize
\caption{\textbf{Predicting the web-scale metric from a small distractor sample.} A correct answer's
rank grows by the number of distractors that outscore it---about $N$ times its per-query \emph{hardness}
$x$, the fraction of \emph{all} distractors scoring above the answer. Estimating $x$ from a small sample
therefore fixes mAP at any gallery size $N$, with no full-scale indexing.}
\label{alg:protocol}
\begin{algorithmic}[1]
\Require query set $Q$ with correct answers; a uniform random sample of $S$ distractors;
target gallery size $N$; rank cutoff $k$
\Ensure predicted mAP@$k$ at gallery size $N$
\State construct a uniform (hash-random) sample; optionally check shard consistency for drift
\For{each query $q$}
  \For{each correct answer $p$ of $q$, with similarity $s_p$}
    \State \emph{estimate the hardness} $x$ from the fraction of sampled distractors scoring above $s_p$
           \Comment{exact rule---count edge with a bounded-EVT tail---in \S\ref{sec:protocol} / App.~\ref{app:evt}}
    \State $r_p \gets (p\text{'s rank among the answers}) + N\,x$
  \EndFor
  \State $\mathrm{AP}_q \gets$ average precision from the ranks $\{r_p\}$ \ [Eq.~\eqref{eq:aplaw}]
\EndFor
\State \Return $\mathrm{mAP@}k = \tfrac{1}{|Q|}\sum_{q} \mathrm{AP}_q$ with a query-bootstrap confidence interval
\end{algorithmic}
\end{algorithm}

\paragraph{Requirement: the sample must be i.i.d.}
The law assumes the $S$ distractors are drawn uniformly at random from the pool. This is the one
assumption a practitioner can silently violate. A convenience sample, such as the first $S$ files of a
pool ordered by crawl date, cluster, or upload, is \emph{not} uniform: correlated images concentrate
the score tail and bias the hardness estimate. Hash-based or shuffled sampling is the actual guarantee
that the drawn sample represents the omitted pool. The per-shard check---split the sample into feature
shards and confirm per-shard accuracy is constant (Sec.~\ref{sec:experiments})---is a diagnostic that
catches drift \emph{within} the drawn sample (visible as shard-to-shard variation), not a proof of
representativeness.

\paragraph{When extrapolation is needed.}
The estimator's job is to recover the unobserved hard tail, so its value scales with how much that
tail matters. With a hard pool (large decay), the count estimate is badly optimistic and tail modeling
is essential. With an easy pool (little decay), there is almost nothing to extrapolate and counting
already suffices. Across our datasets a random small sample recovers mAP to within $\sim$0.01.

%% file: sec/experiments.tex
\section{Experiments}
\label{sec:experiments}

\begin{figure*}[t]\centering
\includegraphics[width=\linewidth]{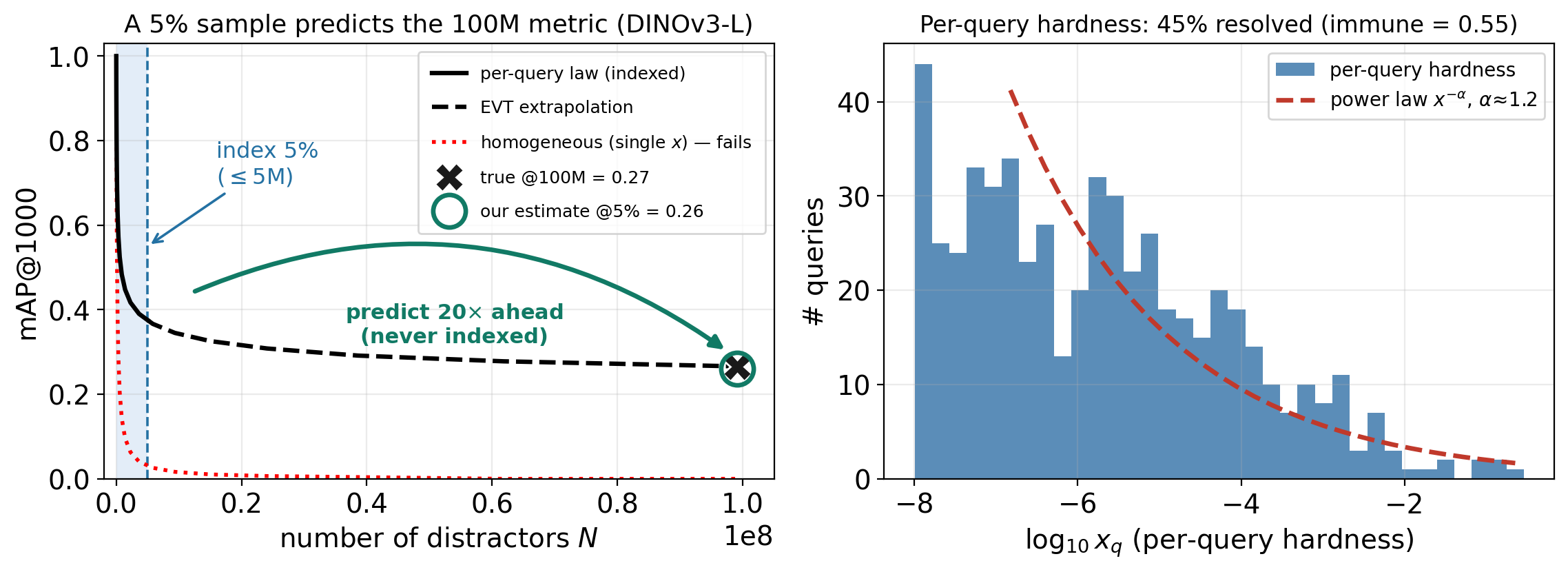}
\caption{\textbf{A $5\%$ sample predicts the $100$M metric.} \emph{Left:} mAP@1000 against gallery size
$N$ for DINOv3-L on ILIAS. Only the shaded $5\%$ is indexed ($N\!\le\!5$M, solid law); past it the
per-query law of Eq.~\eqref{eq:aplaw} is pure EVT extrapolation (dashed), carrying the metric $20\times$
to the true $100$M value (\textbf{X}, $0.27$), which the $5\%$ estimate recovers ($0.26$)---while a
homogeneous single-$x$ model collapses to zero. \emph{Right:} the per-query hardness distribution---$55\%$
of queries stay immune (never outranked even at 100M) and the resolved rest follow an approximate power
law $x^{-\alpha}$, $\alpha{\approx}1.2$ (dashed) across ${\sim}6$ decades toward the $1/N$ floor.}
\label{fig:law}
\end{figure*}

\paragraph{Setup.}
We evaluate on three benchmarks across two distractor pools. \textbf{ILIAS}~\citep{ilias2025}:
1{,}232 instance queries against 100M YFCC distractors, using the official precomputed features
for \nBackbones{} backbones (DINOv2-L, DINOv3-L, SigLIP-L, SigLIP2-L, PE-core-L), which we
reproduce to cosine $1.00$; we report the official mAP@1000. \textbf{INSTRE}~\citep{instre2015}:
1{,}250 object queries with the same 100M pool appended. \textbf{Google Landmarks
v2}~\citep{weyand2020gldv2}: 1{,}129 retrieval queries against the $759$k-image index pool---a
different domain, its own ground truth, and a \emph{same-domain} (landmark-vs-landmark) distractor
distribution---encoded with the same five backbones at their native preprocessing; we report the
official mAP@100. Samples are uniform-random, verified with the per-shard diagnostic of
Sec.~\ref{sec:protocol}.

\paragraph{The law holds (Fig.~\ref{fig:law}).}
For DINOv3-L the per-query prediction tracks measured mAP across eight decades to
\trueMap{} at 100M (the value computed on the full official feature release). By contrast, a
homogeneous single-$x$ model collapses to zero. Recovering the official number end-to-end
validates the pipeline and the law at ground truth. Hardness follows an approximate power law $x^{-\alpha}$
($\alpha\approx1.2$; $1.16$--$1.32$ across the five backbones and three benchmarks): a large fraction of
queries stay near-immune even at 100M, and the resolved rest bleed accuracy at a near-constant rate per
decade---log-uniform ($\alpha{=}1$) is the idealized limit of Cor.~\ref{cor:loguniform} and gives exactly
that. This never-saturating decay is also why reading the metric off a 50k subset is optimistic by $+0.34$
at 100M (Fig.~\ref{fig:law}).

\paragraph{The independence assumption is benign in practice (Fig.~\ref{fig:cross}).}
The exact $(1-x)^N$ form assumes independent distractors. Empirically the residual between the independent
prediction and the measured incremental curve is at most a few percent at 100M across all \nBackbones{}
backbones, and the prediction is stable to $0.004$ across three disjoint $5$M sub-pools
(App.~\ref{app:evt}). Analytically, near-duplicate clusters of size $c$ leave the \emph{expected} rank
unchanged ($1+Nx_q$, clustering-invariant by linearity) and inflate only its variance by ${\sim}c$; on
the real crawl ordering the measured curve sits at $c\!\approx\!1$, so we do \emph{not} claim i.i.d.\ is
conservative for mAP in general---only that the residual is small here (Fig.~\ref{fig:cross},
App.~\ref{app:evt}).

\paragraph{Predicting mAP@100M (Tab.~\ref{tab:map}).}
Because mAP@1000 depends on the rank of \emph{every} positive and marginal positives have hardness in the
unobserved tail, recovering it takes tail extrapolation rather than counting (closed form
Eq.~\eqref{eq:aplaw}, exact in the realized hardnesses; derivation in App.~\ref{app:map},~\ref{app:bound}).
From a $5\%$ sample ($5$M for ILIAS, a $20\times$ reduction), the tail-aware (EVT) estimator recovers true
mAP@100M to mean error \maeMap{} (max \mapEvtMaxErr) across the five ILIAS backbones. The query-bootstrap interval's overlap with the true interval for every backbone (App.~\ref{app:bound}, Tab.~\ref{tab:mapci})
is a weaker check, since prediction and truth share the same queries and their intervals are correlated.
Two stronger checks (App.~\ref{app:bound}) agree: the estimate preserves the backbone ranking exactly
(Kendall $\tau=1.0$, no pairwise sign flip; gaps down to ${\approx}\repResolveGap$ mAP are resolved, and
the closest pair---PE-core vs.\ SigLIP2, gap $0.010$---is within the extrapolation error and should be
read as a statistical tie), and the paired per-query error has pooled $95\%$ CI
$[-0.001,+0.004]$. At this same 5M cost, naively evaluating on the subset is optimistic by mean $0.11$,
whereas our prediction is within $0.005$, roughly $22\times$ more accurate (App.~\ref{app:bound},
Tab.~\ref{tab:eval5m}). For DINOv3-L the estimator gives \mapEvtVal{} vs.\ \trueMap{} (err \mapEvtErr), while the
count estimator over-predicts (\mapCountErr). The estimate is stable across sample size (Fig.~\ref{fig:sens}),
and $5\%$ (5M) is accurate for every backbone across all three benchmarks.

These intervals resample queries only. Folding in the other two noise sources, namely \emph{which}
distractors land in the sample and the per-query EVT refit, needs a nested bootstrap over all five
backbones. It adds only $\sigma\!\approx\!0.003$ of distractor-sample-plus-fit noise, so the intervals
shift slightly rather than widen, and still covers the true 100M value in $5/5$ cases
(App.~\ref{app:bound}, Tab.~\ref{tab:fullci}). Query sampling dominates the uncertainty. Resampling the
distractors directly---$\repK$ disjoint $5\%$ draws per backbone---places \emph{all} $100$ predictions
inside the reported interval (coverage $\repCover$; std $\repStd$, worst error $\repWorst$;
App.~\ref{app:repeated}). The reported interval is a query confidence interval; the remaining noise
sources are quantified in App.~\ref{app:bound}.

We compare against the baselines a practitioner would reach for, all from the same $5\%$ sample. Three
fail on the unresolved tail: reporting the metric on the sample (``evaluate-at-$S$'') is optimistic by
$0.08$--$0.13$ (Tab.~\ref{tab:eval5m}); the tail-blind count (the Krichene--Rendle unbiased estimator),
which assigns the tail below $1/S$ zero mass, over-predicts by $0.037$--$0.068$; and power-law ($\maePower{}$) and log-uniform
survival ($\maeLoguni{}$) curve fits do no better. The tail-aware estimate (mean error $\maeMid{}$; the
headline paired mean of Tab.~\ref{tab:map} is $\maeMap{}$) beats all of them---only the bounded GPD
($\xi\le0$, free scale) excludes the power-law regime and recovers the tail.

\begin{table}[t]\centering\small
\resizebox{\columnwidth}{!}{\input{sec/map_table}}
\caption{\textbf{Recovering the 100M metric from a $5\%$ ($5$M) sample.} For each backbone: the true
mAP@1000 at 100M (our recompute, matching the official ILIAS leaderboard to ${\le}0.001$), and four
numbers obtainable at a $5$M ($20\times$ smaller) budget---the leaderboard's official \emph{hard-mined}
$5$M column, a \emph{naive uniform-random} $5$M evaluation, the tail-blind \emph{count} baseline, and our
tail-aware \emph{EVT} estimate. All three baselines over-predict (mean $+0.041/+0.113/+0.057$); only EVT
recovers the true value (mean $|$err$|$ $0.005$), and it needs no reference 100M numbers---unlike a fixed
$5$M-to-$100$M rescaling of the leaderboard column, which INSTRE and GLDv2 lack. \textbf{All differences
are statistically significant}: each baseline's $95\%$ query-bootstrap interval lies above the truth
($0/5$) and every EVT interval overlaps it ($5/5$); detailed intervals in App.~\ref{app:bound}
(Tab.~\ref{tab:mapci}).}
\label{tab:map}
\end{table}

\paragraph{Generalization across pools (Tab.~\ref{tab:cross},~\ref{tab:baselines}, Fig.~\ref{fig:gldv2}).}
The estimator is not tuned to ILIAS: its configuration ($T{=}4000$, $q{=}0.95$, $\xi\le0$) is fixed once,
so INSTRE and GLDv2 are held-out tests (App.~\ref{app:evt}). On \textbf{INSTRE}
(mAP@1000 at 100M, all \nBackbones{} backbones) it recovers the 100M value to within $0.007$ for every
backbone (mean $|$err$|$ $0.005$, Tab.~\ref{tab:instre}), preserving the ranking, while the count edge
over-predicts (mean $+0.008$). \textbf{GLDv2}
---a different domain with $759$k \emph{same-domain} landmark distractors and official mAP@100---is a
held-out test of comparable difficulty (immune fraction---queries never outranked even at 100M---$0.40$--$0.64$,
vs.\ $0.37$--$0.55$ on ILIAS): the law holds (proof-vs-measured MAE $\le0.010$), the
truth lies inside the interval $5/5$, and a $5\%$ sample recovers mAP@100 to mean error $0.008$
(Fig.~\ref{fig:gldv2}), while count and curve fits over-predict (Tab.~\ref{tab:baselines})---it still
holds. Across all three benchmarks and two pools the recovery stays within ${\sim}0.01$
(Tab.~\ref{tab:cross}).

\paragraph{Faster, cheaper, greener (Fig.~\ref{fig:co2}).}
Indexing 100M images through a ViT-L costs $\approx$\coIndexHours\,GPU-hours and
$\approx$\coIndexKg\,kg\,CO$_2$eq per backbone (A100, $0.4$\,kg/kWh~\citep{lacoste2019co2}), rising to
$\approx$\coSweepKg\,kg for a 7-backbone sweep; the $5\%$ protocol cuts this $\coReduction\times$
(Fig.~\ref{fig:co2}).

%% file: sec/map_table.tex
\setlength{\tabcolsep}{5pt}
\begin{tabular}{lccccc}
\toprule
 & & \multicolumn{4}{c}{estimate from a $5\%$ ($5$M) sample} \\
\cmidrule(lr){3-6}
backbone & true mAP@100M & official 5M & naive 5M & count & EVT (ours) \\
 & & (hard-mined) & (random) & & \\
\midrule
DINOv3-L  & 0.266 & 0.311 & 0.374 & 0.317 & \textbf{0.260} \\
PE-core-L & 0.223 & 0.271 & 0.355 & 0.287 & \textbf{0.225} \\
SigLIP2-L & 0.213 & 0.253 & 0.341 & 0.280 & \textbf{0.221} \\
SigLIP-L  & 0.200 & 0.242 & 0.324 & 0.268 & \textbf{0.208} \\
DINOv2-L  & 0.154 & 0.185 & 0.230 & 0.191 & \textbf{0.153} \\
\midrule
mean $|$err$|$ & --- & $+0.041$ & $+0.113$ & $+0.057$ & \textbf{0.005} \\
\bottomrule
\end{tabular}

%% file: sec/discussion.tex
\section{Discussion and Conclusion}
\label{sec:discussion}

The per-query hardness $\{x_q\}$ is itself a reusable artifact: specific to a model and its distractor
source, it lets that method's metric be read off at any gallery size in closed form and makes results
obtained with different distractor budgets commensurable. We release it for our backbones on ILIAS.

\paragraph{Conclusion.}
Web-scale retrieval evaluation is largely predictable: a per-query tail-aware estimator recovers
100M-scale accuracy from a small random distractor sample to \maeMap{} mean error, covering the truth for
every backbone we test on ILIAS, INSTRE, and GLDv2. Validated across five backbones, three benchmarks,
and two distractor pools, it turns web-scale evaluation from an indexing problem into a prediction
problem---for method comparisons whose margins exceed the extrapolation error.

%% file: sec/appendix.tex
\section{Derivation of mAP under distractors}
\label{app:map}

Fix a query $q$ with $n_q$ positives sorted by similarity, $s_{p_1}\ge\dots\ge s_{p_{n_q}}$, so
$p_j$ is the $j$-th best positive. Adding $N$ i.i.d.\ distractors, the number scoring above $p_j$ is
$K_j\sim\mathrm{Binomial}(N,x_{q,j})$ with per-positive hardness $x_{q,j}=1-F_q(s_{p_j})$ (Thm.~1
applied at threshold $s_{p_j}$). The rank of $p_j$ in the full list is the count of items at least
as similar---the $j$ positives $p_1,\dots,p_j$ plus the $K_j$ distractors above it:
\begin{equation}
\mathrm{rank}_j \;=\; j + K_j,\qquad \mathbb{E}[\mathrm{rank}_j]=j+N\,x_{q,j}.
\end{equation}
Average precision truncated at $k$ is, since exactly $j$ positives have rank $\le\mathrm{rank}_j$,
\begin{equation}
\mathrm{AP}_q \;=\; \frac{1}{\min(k,n_q)}\sum_{j:\,\mathrm{rank}_j\le k}\frac{j}{\mathrm{rank}_j}.
\end{equation}
Evaluating the map $R_k$ of Eq.~\eqref{eq:aplaw} at the realized fraction $\bar x_{q,j}=K_j/N$ gives
$\mathrm{rank}_j=j+K_j$ and reproduces this gallery's mAP@$k$ \emph{exactly} (no expectation, no
Taylor remainder; App.~\ref{app:bound}). Feeding instead the population hardness $x_{q,j}$---or its
sample estimate $\hat x_{q,j}$---\emph{predicts} the metric, with the population$\to$realized
concentration error analyzed in App.~\ref{app:bound}. We confirm the identity numerically by
evaluating $\mathrm{rank}_j=j+K_j$ with the observed $K_j$---counting each positive's distractor
exceedances in one streamed pass---and find the analytic form matches the measured mAP
(Tab.~\ref{tab:map}). The protocol substitutes the sampled hardness estimate $\hat x_{q,j}$ (count or
tail-aware) for $x_{q,j}$ and reports the tail-aware estimate with its bracket. Unlike the top-rank survival's
exponential $(1-x_q)^N$, the per-positive precision $j/(j+N\hat x_{q,j})$ is a gently varying
rational response that damps errors in $\hat x_{q,j}$: the tail estimate's over-shoot becomes
negligible, whereas the count estimate stays structurally over-optimistic (it cannot report
hardness below the sample floor $1/S$). Empirically the truth therefore sits at the tail edge of
the interval for mAP@1000---the tail-aware estimate alone is nearly exact, while the count edge
over-predicts (Tab.~\ref{tab:map}).

\section{The tail-aware (EVT) estimator}
\label{app:evt}

For a query whose best positive has similarity $s^\star$, the hardness
$x_q=\Pr[\mathrm{sim}(q,d)>s^\star]$ is the probability that a random distractor outscores the
answer. When $x_q$ is small---the regime that dominates large-$N$ accuracy---no distractor in a
size-$S$ sample reaches $s^\star$, so the empirical count returns $x_q=0$ and we must estimate the
probability of an event that did not occur in the sample.

Extreme-value theory makes this possible without knowing the full score distribution. The
Pickands--Balkema--de Haan theorem states that, for any distribution in an extreme-value domain of
attraction (essentially all continuous distributions), the
\emph{excesses} over a high threshold $u$ (the amounts $\mathrm{sim}-u$ by which scores exceed $u$)
converge to a \emph{generalized-Pareto distribution} (GPD) as $u$ grows. In words: the far tail has
a universal two-parameter shape, independent of the bulk, so we can fit it from the part of the
tail we observe and extend it.

Per query we (1) retain the top $T{=}4000$ distractor similarities---all that bear on the upper tail,
the rest falling far below $s^\star$---and set the threshold $u$ to their $95$th percentile, i.e.\ the
top ${\approx}200$ scores; because $T$ is fixed, this is a \emph{fixed} exceedance count independent of
$S$ (the same ${\approx}200$ at $S{=}50$k or $5$M); (2) fit the GPD shape $\xi$ and scale $\sigma$ to
the excesses $\{\mathrm{sim}_i-u:\mathrm{sim}_i>u\}$ by maximum likelihood; and (3) set the exceedance
rate $\zeta_u=\#\{\mathrm{sim}>u\}/S$---the ${\approx}200$ exceedances normalized by the \emph{full}
sample size $S$, not by $T$, so $\zeta_u$ shrinks as $S$ grows. The hardness estimate is
\begin{equation}
\hat{x}_q \;=\; \zeta_u\,\big(1-\mathrm{GPD}_{\xi,\sigma}(s^\star-u)\big),
\end{equation}
the rate of exceeding $u$ times the fitted-tail probability of reaching all the way to
$s^\star$---read off the curve \emph{beyond} the observed range. We use the empirical count when at least
$c{=}5$ sampled distractors exceed $s^\star$ (Alg.~\ref{alg:protocol}); with fewer, the count is too noisy
(and $c{=}0$ would force $x_q{=}0$), so we read the hardness off the fitted tail instead. This is the
standard peaks-over-threshold construction, used by meta-recognition~\citep{scheirer2011meta,furon2013evt} on
recognition scores; the intuition is estimating a never-observed ``$100$-year flood'' by fitting the
tail of recorded flood heights and extrapolating.

\paragraph{The tail shape must exclude power-law growth.}
The GPD shape $\xi$ sets how heavy the tail is: $\xi>0$ is an infinite, power-law tail, whereas cosine
similarities cannot exceed $1$, so the distractor-score distribution has \emph{bounded} support. By
extreme-value theory a distribution with bounded support lies in the Weibull/Gumbel domain, $\xi\le0$; a
positive $\xi$ would model a physically impossible power-law tail. We therefore cap $\xi\le0$ (refitting
the scale at the boundary when the unconstrained maximum-likelihood fit returns $\xi>0$). This excludes
the over-stated hard mass that otherwise inflates hardness on exactly the near-immune queries that
dominate large-$N$ accuracy. Everything else stays fully adaptive: the threshold $u$, scale $\sigma$, and
exceedance rate $\zeta_u$ are fit per query from its own scores.

\paragraph{Robustness to the threshold.}
The estimator's most consequential knob is the peaks-over-threshold quantile $q$ (we use the $95$th
percentile; $T$ and $\xi\le0$ are fixed once, below). Sweeping it across the five ILIAS backbones, the mAP@100M error is essentially
flat---mean error $0.006$--$0.009$ for every $q\in\{0.90,0.95,0.975,0.99\}$, always below the metric
noise---so the protocol is robust to this choice rather than tuned to it. We fix the full configuration
($T{=}4000$, $q{=}0.95$, $\xi\le0$) once and apply it unchanged to every backbone, dataset, and
pool: it is a single setting, not tuned per benchmark. This estimator configuration is not selected
against any full-scale ($100$M) ground truth; the $5\%$ sample fraction, separately, \emph{is} validated
against it (App.~\ref{app:repeated}). The bounded constraint $\xi\le0$ is dictated by the geometry (cosine similarity is bounded
above by $1$), not fit to the targets; $T$ and $q$ are set on ILIAS and frozen, so INSTRE and GLDv2 are
evaluated with a held-out configuration.
\begin{center}\small
\begin{tabular}{lcccc}
\toprule
threshold $q$ & $0.90$ & $0.95$ & $0.975$ & $0.99$ \\
\midrule
mAP MAE & $0.007$ & $0.006$ & $0.009$ & $0.007$ \\
max error & $0.014$ & $0.010$ & $0.016$ & $0.014$ \\
\bottomrule
\end{tabular}
\end{center}

\paragraph{Small samples, exceedances, and sampling (DINOv3/ILIAS).}
Three further diagnostics. \emph{(i) Sample size.} The estimate is accurate from $S\!\approx\!50$k
distractors (mAP@100M error $0.005$ at $50$k, $\le0.011$ for all $S\ge50$k) and degrades below it
($0.09$ at $10$k, $0.15$ at $5$k)---so below ${\sim}50$k the tail is too sparsely sampled to trust;
${\sim}50$k is the smallest sample at which the estimate is reliable for DINOv3. \emph{(ii) Tail data per query.} The
per-query GPD fit is
well-conditioned: every query supplies ${\approx}200$ exceedances above the threshold (the $95$th
percentile of the retained top-$4000$ tail)---far above the $50$-point minimum, and by construction the
same count at $S\!=\!50$k or $S\!=\!5$M---so per-query EVT is never data-starved.
\emph{(iii) Which distractors.} The prediction is insensitive to \emph{where} the sample is drawn:
across three disjoint $5$M blocks of the $100$M pool it varies by only $0.004$, confirming the pool is
well-mixed and the uniform-sample requirement holds in practice.

\paragraph{GPD goodness of fit (Tab.~\ref{tab:gpd}).}
We audit the per-query fit directly on a $5\%$ sample. The fit never falls back to counting for lack of
tail data (failure rate $\le\gpdFail$). The bounded constraint is not doing all the work: the
unconstrained maximum-likelihood shape is already $\xi\le0$ for most queries, and the cap $\xi\le0$ binds
in only $\gpdCapLo$--$\gpdCapHi$ of them (more often for the heavier-tailed backbones), where it prevents a
physically impossible power-law tail. The median fitted shape is negative and the median fitted upper
endpoint is ${\approx}\gpdEndpoint$---safely below the cosine ceiling $1$, and query-specific rather than
pinned. The one residual concern is the ${\sim}20$--$30\%$ of fits whose endpoint falls below the
positive's own score $s^\star$ (Tab.~\ref{tab:gpd}): there the tail model treats the positive as
effectively immune, which is right for well-separated positives but a coarse approximation for marginal
ones---an error absorbed, like the top-$k$ truncation term, into the end-to-end 100M validation rather
than bounded in closed form.

\begin{table}[t]\centering\small
\input{sec/gpd_table}
\caption{\textbf{Per-query GPD fit diagnostics} on a $5\%$ sample: fraction of queries whose tail is too
small to fit (fail), fraction where the unconstrained MLE wants $\xi{>}0$ so the cap binds, the median
fitted shape $\hat\xi$ and upper endpoint, and the fraction whose endpoint falls below the positive's
score $s^\star$.}
\label{tab:gpd}
\end{table}

\begin{table}[t]\centering\small
\input{sec/estimator_table}
\caption{\textbf{Estimators vs.\ baselines}, MAE of the official GLDv2 mAP@100 at its full $759$k
landmark pool, recovered from a $5\%$ sample and averaged over the \nBackbones{} GLDv2 backbones. The
recsys (K-R) count estimator and curve fits trail tail-aware (EVT) extrapolation.}
\label{tab:baselines}
\end{table}

\begin{table}[t]\centering\small
\input{sec/instre_table}
\caption{\textbf{INSTRE at true 100M, all \nBackbones{} backbones.} True mAP@1000 at the full YFCC-100M
pool and its recovery from a $5\%$ sample. The tail-aware (EVT) estimate is within $0.007$ for every
backbone (mean $|$err$|$ $0.005$) and preserves the full ranking, while the recommender-systems count
edge over-predicts throughout (mean $+0.008$). Same frozen configuration ($T{=}4000$, $q{=}0.95$,
$\xi\le0$) as ILIAS and GLDv2.}
\label{tab:instre}
\end{table}

\begin{figure*}[t]\centering
\includegraphics[width=\linewidth]{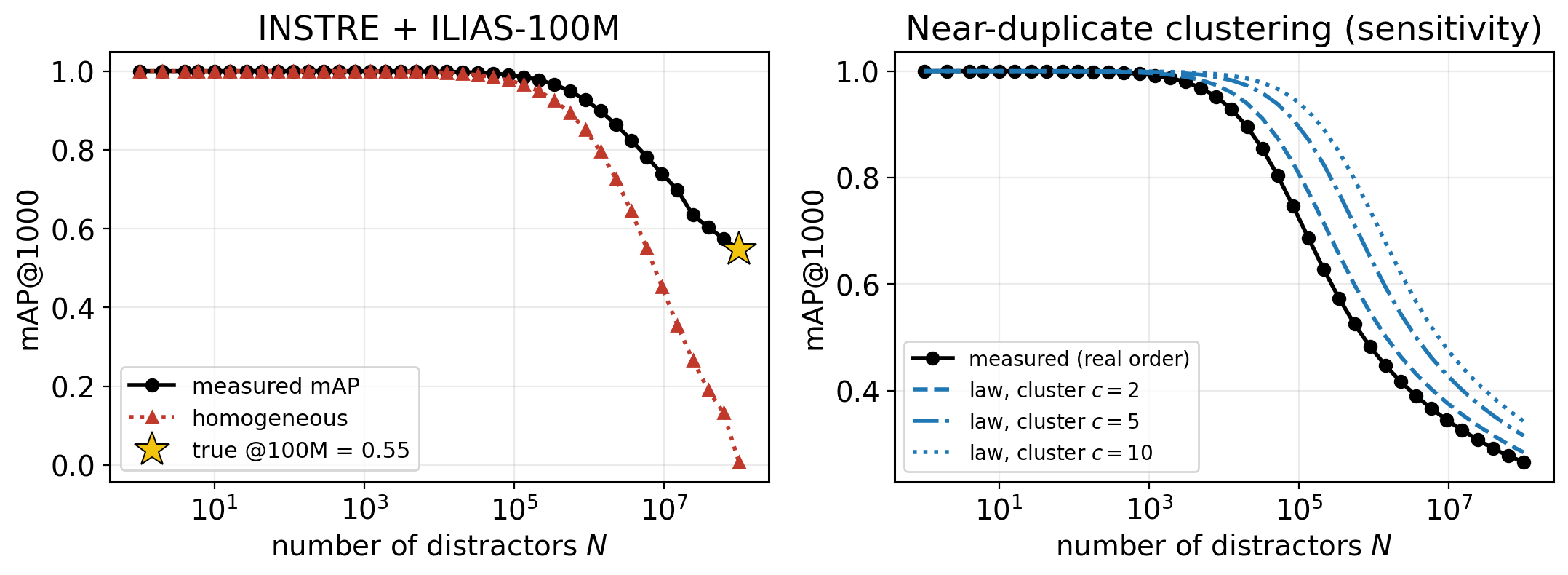}
\caption{\textbf{A second query set and the independence assumption.} \emph{Left:} mAP@1000 (vertical)
vs.\ number of distractors $N$ (horizontal, log) for INSTRE against the ILIAS-100M pool---the law tracks
the truth on a different query set. \emph{Right:} sensitivity to near-duplicate clustering on ILIAS;
treating each cluster of size $c$ as one effective item (top-rank survival $(1-x)^{N/c}$) shifts accuracy
\emph{up}---an illustrative case (a deduplicated gallery), not a general bound on truncated mAP under
retained duplicates. The measured real-order curve sits at $c\!\approx\!1$, so clustering is mild.}
\label{fig:cross}
\end{figure*}

\begin{figure}[t]\centering
\includegraphics[width=0.86\linewidth]{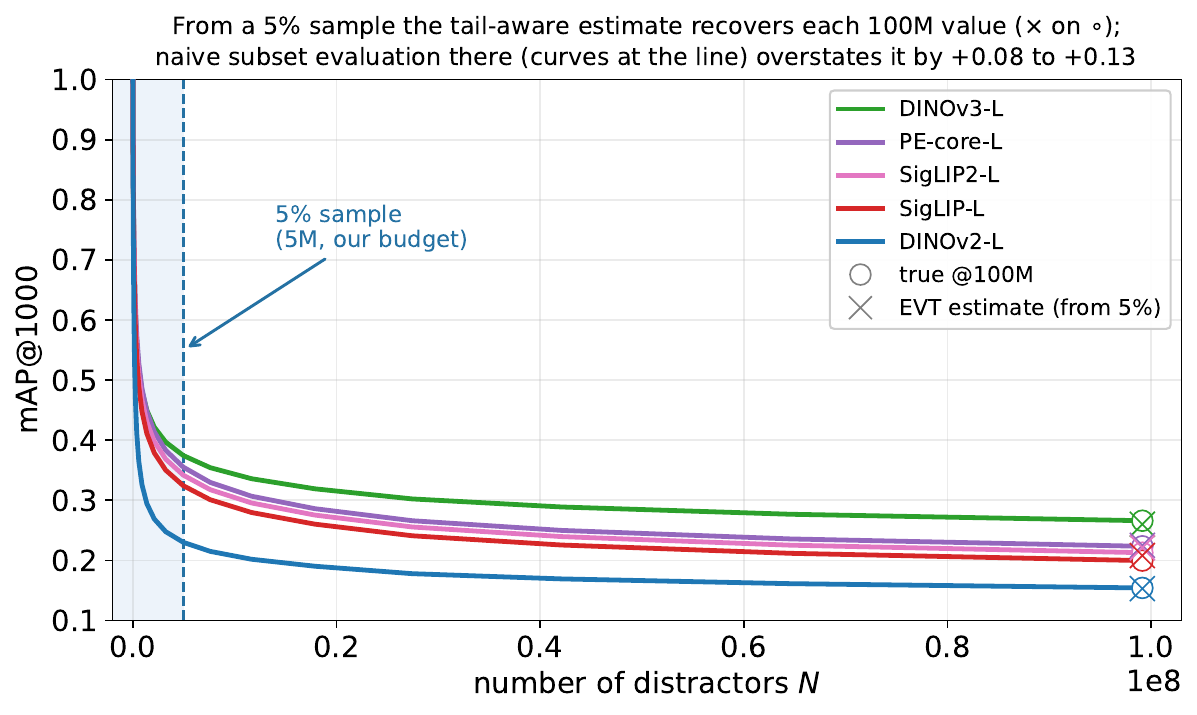}
\caption{\textbf{Recovering mAP@100M from a $5\%$ sample, all five ILIAS backbones.} True mAP@1000 vs.\
number of distractors $N$ (linear axis); each solid curve grows to its true 100M value ($\circ$). Reading
the metric \emph{at} the $5\%$ ($5$M) sample (dashed line; shaded) overstates the 100M value by $+0.08$
to $+0.13$; the tail-aware (EVT) estimate from that same sample ($\times$) recovers each true value to
within ${\sim}0.01$. Estimate markers are the $20$-draw means of Tab.~\ref{tab:repeated}.}
\label{fig:recovery}
\end{figure}

\begin{figure*}[t]\centering
\includegraphics[width=\linewidth]{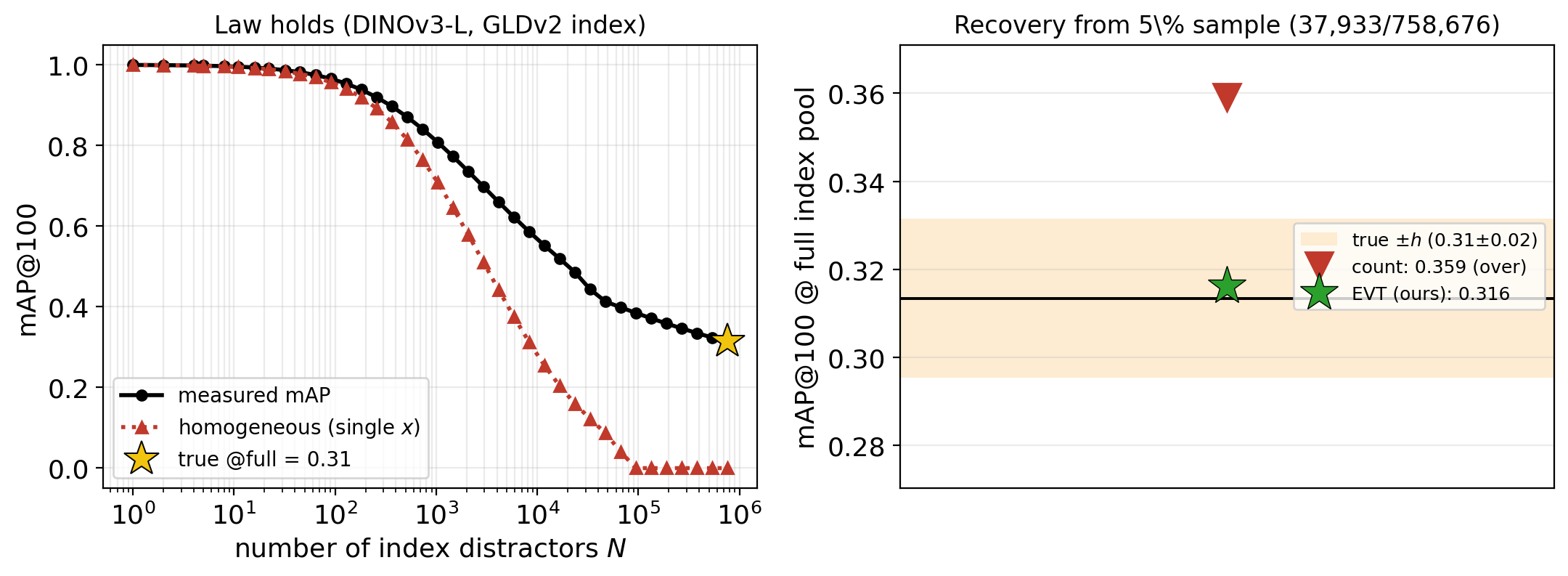}
\caption{\textbf{GLDv2.} \emph{Left:} mAP@100 (vertical) vs.\ number of index distractors $N$
(horizontal, log) for DINOv3-L---the per-query law (dashed) tracks the measured curve (MAE $0.010$).
\emph{Right:} recovery of the official mAP@100 at the full $759$k pool from a $5\%$ sample: the count
edge over-predicts (lands above the band), while the tail-aware (EVT) estimate lands on the truth (black
line), inside the $\pm h$ query-bootstrap noise band (shaded).}
\label{fig:gldv2}
\end{figure*}

\begin{figure}[t]\centering
\includegraphics[width=0.6\linewidth]{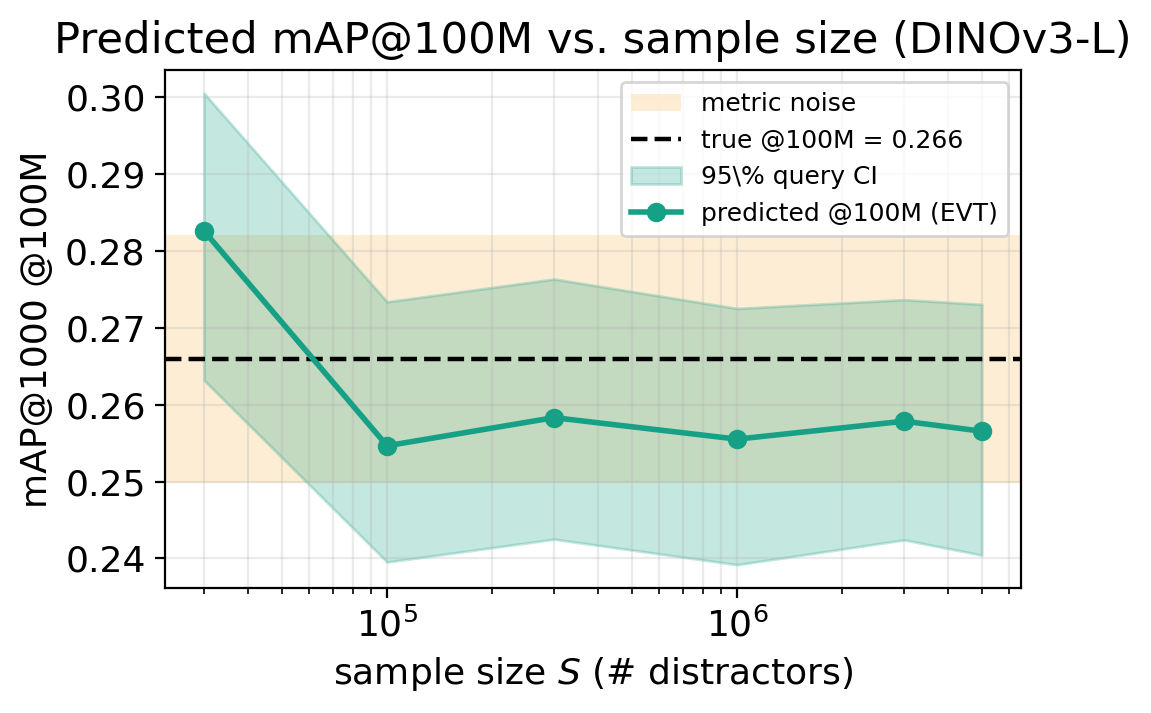}
\caption{Predicted mAP@100M (vertical) vs.\ sample size $S$ (horizontal, log scale) for DINOv3-L,
with a $95\%$ query-bootstrap band; dashed line is the true 100M value.}
\label{fig:sens}
\end{figure}

\paragraph{Why the tail estimate suffices for mAP.}
Empirically the truth sits at the tail edge of the count/tail bracket (Sec.~\ref{sec:protocol}), and
the metric's response to hardness error explains why. The top-rank survival $(1-x_q)^N$ is
exponential and amplifies error in $\hat{x}_q$; mAP's per-positive precision $j/(j+N\hat{x}_{q,j})$ is
a gently varying rational response that damps the tail estimate's over-shoot, so for tail-dominated
reporting scales (mAP@1000 at 100M) the tail estimate alone is nearly exact (Tab.~\ref{tab:map}),
while on shallower pools the count edge over-predicts from above.

\begin{table}[t]\centering\small
\begin{tabular}{llcc}
\toprule
dataset & pool & metric @full pool & EVT err \\
\midrule
ILIAS & YFCC-100M & \trueMap & \maeMap \\
INSTRE & YFCC-100M & \instreHundredMap & $0.005$ \\
GLDv2 & index-759k & $0.31$ & $0.008$ \\
\bottomrule
\end{tabular}
\caption{\textbf{Three benchmarks, two distractor pools.} A small random sample recovers
the large-$N$ mAP within $\sim$0.01 with the tail-aware estimator; ILIAS/INSTRE are mAP@1000, GLDv2
is the official mAP@100. The metric column is DINOv3-L; error is the mean over each pool's \nBackbones{}
backbones (per-backbone detail in Tab.~\ref{tab:eval5m} for ILIAS, Tab.~\ref{tab:instre} for INSTRE,
Tab.~\ref{tab:baselines} for GLDv2).}
\label{tab:cross}
\end{table}

\begin{figure}[t]\centering
\includegraphics[width=0.6\linewidth]{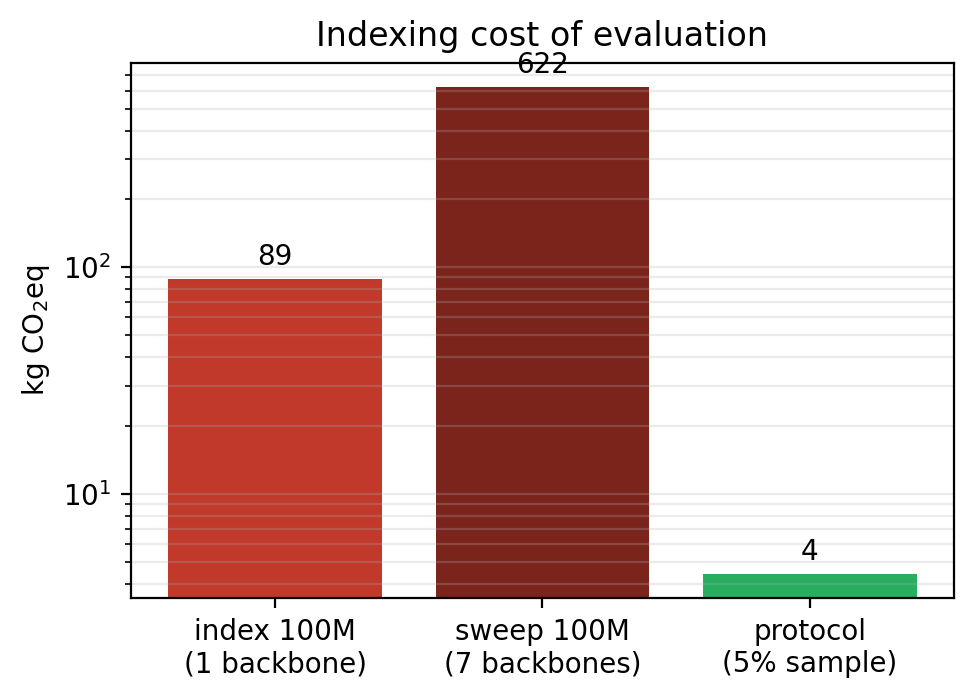}
\caption{Estimated indexing carbon, kg CO$_2$eq (vertical, log scale): indexing 100M for one
backbone ($\approx$\coIndexKg\,kg), a 7-backbone sweep ($\approx$\coSweepKg\,kg), and our $5\%$ protocol
($\approx$\coProtKg\,kg per backbone, $\coReduction\times$ less). Illustrative estimate under the named
hardware assumptions.}
\label{fig:co2}
\end{figure}

\section{Error analysis of the mAP estimator}
\label{app:bound}

\paragraph{The AP law is exact for the realized gallery.}
A benchmark's distractor pool is \emph{one fixed set} of $N$ items, and its mAP is a deterministic
number. The $j$-th positive's rank is exactly $r_j=j+K_j$, where $K_j$ is the actual count of
distractors that outscore it; writing the \emph{realized exceedance fraction} $\bar x_{q,j}:=K_j/N$
(distinct from the population hardness $x_{q,j}$ of Eq.~\eqref{eq:hardness}), we have
$j+N\bar x_{q,j}=j+K_j=r_j$, so
\begin{equation}
\mathrm{mAP}(N)=\frac1{|Q|}\sum_q\frac1{\min(k,n_q)}\sum_{j:\,j+N\bar x_{q,j}\le k}\frac{j}{j+N\bar x_{q,j}}
\end{equation}
is precisely the map $R_k$ of Eq.~\eqref{eq:aplaw} evaluated at the realized fractions---an exact
identity, truncation included, with no linearization term. The $\mathbb{E}[1/r]$ vs.\ $1/\mathbb{E}[r]$
contrast arises only if one treats $K_j$ as a fresh $\mathrm{Poisson}(Nx_{q,j})$ draw and asks for the
mAP \emph{averaged over hypothetical re-samplings of the pool}---a different quantity that no benchmark
reports, and one Jensen's inequality places slightly \emph{above} the realized value. Confirming this across all five backbones: the expected-rank map reproduces the realized
mAP@100M to within $0.001$ (it matches the true value of Tab.~\ref{tab:map} to three decimals), whereas
the exact Binomial $\mathbb{E}[\mathrm{AP}]$---the per-positive sum $\sum_m\tfrac{j}{j+m}\Pr[K_{q,j}{=}m]$,
evaluated in full---sits $+0.004$ to $+0.006$ \emph{above} it. Targeting the realized number with the
expected-rank map, as we do, is therefore the more accurate choice, not an approximation we settle for. The prediction error therefore splits into two terms,
\begin{equation}
\begin{aligned}
\big|\widehat{\mathrm{mAP}}(N)-\mathrm{mAP}(N)\big| \;\le\;
&\underbrace{E_{\mathrm{est}}\big(\{\hat x_{q,j}\}\big)}_{\text{sample}\to\text{pop., at }S}\\
&+\;\underbrace{E_{\mathrm{conc}}}_{\text{pop.}\to\text{realized, at }N},
\end{aligned}
\end{equation}
and the concentration term is negligible: the realized fraction $\bar x_{q,j}=K_j/N$ has standard
deviation $\sqrt{x_{q,j}(1-x_{q,j})/N}$ about $x_{q,j}$---a relative fluctuation $\sqrt{(1-x)/(Nx)}$
that at $N{=}100$M and transition-zone hardness $x\!\sim\!10^{-5}$ is ${\sim}3\%$ per positive and
averages down across positives and queries. The estimation term, at the far smaller sample size $S$,
dominates; we bound it next, separating sample-resolved positives from the unresolved tail.

\paragraph{Estimation term: the visible mass concentrates.}
For a positive whose hardness the sample resolves, the count estimate $\hat x_{q,j}=K/S$ has
$K\sim\mathrm{Binomial}(S,x_{q,j})$, so by Hoeffding's inequality
$|\hat x_{q,j}-x_{q,j}|\le\sqrt{\ln(2/\delta)/(2S)}$ with probability $\ge1-\delta$. This deviation
propagates benignly: the top-rank survival $(1-x)^N$ is $N$-Lipschitz in $x$
($|\tfrac{d}{dx}(1-x)^N|=N(1-x)^{N-1}\le N$), and each AP precision $j/(j+Nx)$ is
$\tfrac1{4x}$-Lipschitz (by AM--GM on $(j+Nx)^2\ge4jNx$), so resolved positives contribute error
controlled by the bound above, and the end-to-end error stays $\le\mapEvtMaxErr{}$ across all five
backbones.

\paragraph{Estimation term: the unresolved tail needs the model.}
The queries that dominate large-$N$ accuracy have $x_q\lesssim1/S$ and are \emph{invisible} in the
sample ($K{=}0$); counting returns $0$, so we estimate their hardness from the fitted upper tail---the
gap posed as open by \citet{madigan2003retrieval}. The bounded GPD is well-suited: by the
Pickands--Balkema--de Haan theorem the threshold-excess distribution approaches a generalized Pareto, and
the cosine ceiling forces $\xi\le0$ (App.~\ref{app:evt}), so the survival decays at least exponentially
and the extrapolated mass $\zeta_u\,(1-\mathrm{GPD}_{\xi,\sigma}(s^\star-u))$ cannot diverge. We validate
the tail estimate empirically---the threshold sweep (App.~\ref{app:evt}), the sample-size and endpoint
diagnostics, and the direct 100M validation. We report the count/tail gap $B$ as an \emph{empirical
diagnostic} of estimator bias, not a guaranteed bound: counting zeroes the unresolved tail and is thus
upward-biased on that mass (a one-sided bias, not a deterministic per-sample bound), while the tail
estimate has no such bias in either direction, so although the truth sits
at the tail edge of $[\hat R_{\mathrm{tail}},\hat R_{\mathrm{count}}]$ in every case we measure, a
misspecified tail could err either way.

\paragraph{Full uncertainty interval: a nested bootstrap that covers $5/5$.}
The interval reported in the main text (Tab.~\ref{tab:map}) resamples the query set only. To fold in
the other two sources---\emph{which} distractors land in the sample, and the per-query EVT
refit on that sample---we run a nested bootstrap on all five ILIAS backbones: draw $K{=}8$ disjoint
$5$M sub-pools of the $100$M stream (disjoint uniform distractor samples), refit the per-query tail
on each, and form the prediction; each bootstrap iteration then picks a random draw (sample $+$ fit
noise) and resamples queries with replacement (query noise). Table~\ref{tab:fullci} reports the
resulting $95\%$ interval against the query-only interval. The added noise is small---the
distractor-sample$+$fit standard deviation is $\sigma_{\mathrm{samp+fit}}\!\approx\!0.003$ (median),
well below the query component ($h/1.96\!\approx\!0.008$)---so the full interval changes little
(endpoints shift more than they widen), and it brackets the held-out true mAP@100M for \emph{all five}
backbones. Query sampling, not distractor sampling or tail-fit, is the dominant uncertainty at the
$5\%$ budget.

\begin{table}[t]\centering\small
\begin{tabular}{lccc}
\toprule
backbone & true mAP@100M & count ($5\%$) & EVT ($5\%$) \\
\midrule
DINOv2-L  & 0.154\,[0.141,0.167] & 0.191\,[0.177,0.205] & 0.153\,[0.140,0.166] \\
SigLIP-L  & 0.200\,[0.185,0.214] & 0.268\,[0.252,0.285] & 0.208\,[0.193,0.222] \\
SigLIP2-L & 0.213\,[0.199,0.228] & 0.280\,[0.263,0.297] & 0.221\,[0.207,0.236] \\
PE-core-L & 0.223\,[0.209,0.238] & 0.287\,[0.271,0.304] & 0.225\,[0.211,0.240] \\
DINOv3-L  & 0.266\,[0.250,0.283] & 0.317\,[0.300,0.335] & 0.260\,[0.244,0.277] \\
\bottomrule
\end{tabular}
\caption{\textbf{Detailed $95\%$ query-bootstrap intervals for Tab.~\ref{tab:map}.} The true value, the
tail-blind count estimate, and our tail-aware EVT estimate, each with a $95\%$ query-bootstrap interval
(resampling the $1{,}232$ queries). The EVT interval overlaps the true interval for every backbone ($5/5$
coverage); the count interval lies entirely \emph{above} the truth for every backbone ($0/5$)---its error
is a systematic bias, not query-sampling noise.}
\label{tab:mapci}
\end{table}

\begin{table*}[t]\centering\small
\input{sec/fullci_table}
\caption{\textbf{Full predictive interval (nested bootstrap).} Folding distractor-sample and EVT-fit
noise into the query-bootstrap interval (via $K{=}8$ disjoint $5$M redraws with per-draw tail refits)
adds only $\sigma_{\mathrm{samp+fit}}\!\approx\!0.003$ std---the endpoints shift slightly rather
than always widen---and still covers the true 100M mAP for every backbone ($5/5$).}
\label{tab:fullci}
\end{table*}

\paragraph{Ranking preservation and paired error.}
Interval overlap (Tab.~\ref{tab:map}) is a weak coverage test, since prediction and truth share the
same queries. Two stronger statistics confirm the estimate. \emph{(i) Ranking.} Across the five
backbones the predicted and true mAP@1000 induce the \emph{same} ordering for the reported estimate:
Kendall $\tau=1.0$, Spearman $\rho=1.0$, no sign flip among the ten pairwise margins. Across $\repK$
\emph{disjoint} distractor samples this holds for the nine pairs separated by more than the
prediction error (preserved in every draw); only the tightest pair (PE-core vs.\ SigLIP2, true gap
$\repContestGap$, within the error) is not reliably resolved, being statistically tied at 100M
(App.~\ref{app:repeated}). Ranking is recovered down to a margin of ${\approx}\repResolveGap$ mAP. \emph{(ii) Paired error.} The
per-query difference $\mathrm{AP}^{\mathrm{EVT}}_q-\mathrm{AP}^{\mathrm{true}}_q$ has pooled mean
$+0.002$ with a $95\%$ bootstrap CI of $[-0.001,+0.004]$ over the $6160$ query-backbone cells; per
backbone the bias stays within $\pm0.008$ (largest SigLIP/SigLIP2 $+0.008$, DINOv3 $-0.006$).

\paragraph{Equal-cost comparison: evaluate at 5M vs.\ predict from 5M.}
At the same $5$M indexing cost ($5\%$ of the pool), a practitioner can either \emph{evaluate} on the 5M
subset (the incumbent cheap practice, e.g.\ the hard-mined mini-ILIAS~\citep{ilias2025}) or
\emph{predict} the 100M metric from it (ours). Naive evaluation is optimistic by $+0.08$ to $+0.13$
(mean $|$err$|$ $0.113$), because 5M distractors under-rank the positives. The tail-aware prediction is
within $0.005$ mean (Tab.~\ref{tab:eval5m}). At identical cost, predicting is roughly $22\times$ more
accurate than evaluating. Evaluate-at-5M is the expected value of a uniform 5M subset (the AP law at
$N{=}5$M, which reproduces the measured DINOv2 curve to $0.006$); a hard-mined 5M set such as
mini-ILIAS is non-uniform and reports a different biased number still short of the 100M value.

\begin{table}[t]\centering\small
\setlength{\tabcolsep}{4pt}
\input{sec/eval5m_table}
\caption{\textbf{At equal 5M cost, predict beats evaluate.} Naive mAP@1000 measured on a uniform 5M
subset (evaluate-at-5M) is optimistic by $+0.08$ to $+0.13$; the tail-aware prediction from the same 5M
sample (EVT-at-5M) is within $0.005$ of the true 100M value. Parentheses are signed error vs.\ true@100M.}
\label{tab:eval5m}
\end{table}

\section{Repeated-sample validation: sampling distribution, coverage, and calibration}
\label{app:repeated}

The interval in the main text is a query bootstrap from a \emph{single} distractor sample; the stronger
test resamples the \emph{distractors}. We partition each backbone's $99$M pool into $K{=}\repK$ disjoint,
pool-spread blocks at a given sample fraction and predict mAP@100M on each---$\repK$ genuinely
disjoint draws (not a single-sample bootstrap; disjoint blocks of a finite pool are negatively dependent under the finite-population constraint), using the paper's estimator ($T{=}4000$, $q{=}0.95$,
$\xi\le0$).

\paragraph{Coverage at the $5\%$ operating point (Tab.~\ref{tab:repeated}).}
Over the $\repK$ disjoint $5\%$ draws per backbone the mean MAE is $\repMae$ ($\repMaeLo$--$\repMaeHi$
across backbones), the draw-to-draw standard deviation is $\repStd$---matching the nested-bootstrap
$\sigma_{\mathrm{samp+fit}}$ of App.~\ref{app:bound}---and the worst single error is $\repWorst$.
\emph{Every} draw lands inside the reported query-bootstrap interval: coverage is $\repCover$ (all $100$
backbone-draws). The interval is, if anything, conservative---its half-width is set by the larger
query-sampling uncertainty, whereas the distractor-sample spread measured directly here is only
${\approx}\repStd$.

\begin{table}[t]\centering\small
\resizebox{\columnwidth}{!}{%
\input{sec/repeated_table}}
\caption{\textbf{Interval coverage over $\repK$ disjoint $5\%$ samples.} Each backbone's $99$M pool is
split into $\repK$ disjoint $5\%$ blocks; mean prediction, MAE, draw-to-draw std, worst error, and the
number of draws inside the reported interval ($|$err$|{\le}h$). Coverage is $\repCover$ overall.}
\label{tab:repeated}
\end{table}

\paragraph{The evaluation converges within a $5\%$ subset (Tab.~\ref{tab:sweep}).}
$5\%$ is a result, not a tuned operating point: as the sample grows, every backbone's estimate converges
onto its $100$M value. Mean MAE falls from $0.020$ at $0.5\%$ to $\repMae$ at $5\%$. The well-separated
backbones (DINOv2, DINOv3) converge by $0.5\%$; the heavy-tailed backbones (SigLIP, SigLIP2), whose deeper
tail a smaller sample cannot yet resolve, converge later---their interval reaches the true value only by
$5\%$ ($0/\repK$ covered at $0.5$--$1\%$). Every tested model converges within a mere $5\%$ of the pool.

\begin{table}[t]\centering\small
\resizebox{\columnwidth}{!}{%
\input{sec/sweep_table}}
\caption{\textbf{Accuracy vs.\ sample fraction} (mean over the five backbones, disjoint draws per row, at
the paper's $T{=}4000$). Error and worst case shrink monotonically with the sample; $5\%$ is the point at
which every backbone is covered.}
\label{tab:sweep}
\end{table}

\paragraph{The sufficient budget.}
Choose a resolution $\tau$. Growing the distractor sample in geometric steps, we predict mAP@100M with the
estimator at each size and stop once doubling from $n/2$ to $n$ moves the prediction by no more than
$\tau$; that sample size is the sufficient budget. Taking $\tau$ at the metric's own query-sampling noise
$h{=}1.96\,\mathrm{std}_q(\mathrm{AP}_q)/\sqrt{n_q}$ ($0.013$--$0.019$) and running against the $100$M
ground truth across the ten web-scale cells (ILIAS and INSTRE, five backbones each, $10^8$ distractors),
the estimate reaches its sufficient budget within $1\%$ of the pool, and at $5\%$ it matches the true
$100$M mAP to a mean absolute error of $0.005$. The $5\%$ budget is more than sufficient to predict the
$100$M metric, and the information that determines it is present in a much smaller sample.

\paragraph{Per-positive calibration (Fig.~\ref{fig:calib}).}
Fig.~\ref{fig:calib} plots the estimated per-positive hardness $\hat x_{q,j}$ (from a $5\%$ sample)
against the true $x_{q,j}$ (100M). Where the sample resolves the positive (count ${\ge}5$) the estimate
sits on the diagonal; in the deep tail ($x<1/S$) the EVT extrapolation is noisier per positive
(log-correlation ${\approx}0.4$) but \emph{unbiased in the median} (median $\hat x/x$ between $0.9$ and
$1.3$ across backbones). Because mAP averages over ${\sim}6000$ positives, this per-positive tail noise
cancels and the metric is recovered to $\repMae$.

\begin{figure*}[t]\centering
\includegraphics[width=\textwidth]{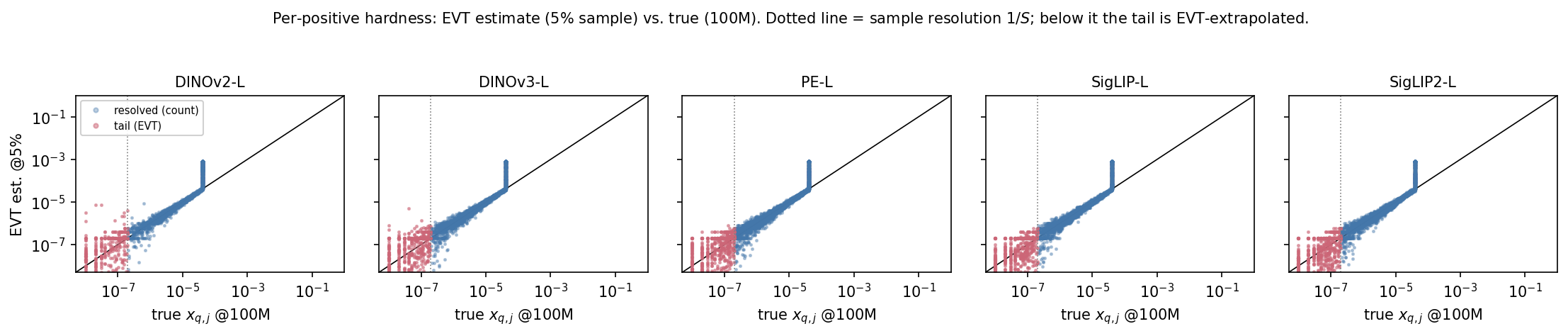}
\caption{\textbf{Per-positive hardness: estimate vs.\ truth.} EVT estimate from a $5\%$ sample (vertical)
vs.\ the true value at 100M (horizontal), per backbone, log--log. Blue: positives the sample resolves by
counting; red: the deep tail ($x<1/S$, dotted line) reached only by EVT extrapolation. The tail scatters
about the diagonal---noisy per positive, unbiased in the median---so the aggregated mAP is accurate.}
\label{fig:calib}
\end{figure*}

\paragraph{Ranking resolution.}
Across the $\repK$ disjoint draws, $\repPairsPerfect$ of the ten backbone pairs are ordered correctly
in \emph{every} draw; the smallest true gap resolved in all draws is $\repResolveGap$. The only pair that
ever swaps is PE-core vs.\ SigLIP2 (true gap $\repContestGap$, below the prediction error), which are
statistically tied at 100M. The method thus resolves rankings down to a margin of ${\approx}\repResolveGap$
mAP: pairs wider than that are preserved across disjoint samples; pairs within the error are not, and
should be reported as ties.

%% file: sec/gpd_table.tex
\begin{tabular}{lccccc}
\toprule
backbone & fit & $\xi{>}0$ & med & med & endpt \\
 & fail & (cap) & $\hat\xi$ & endpt & ${<}s^\star$ \\
\midrule
DINOv2-L & 0.0\% & 10\% & -0.14 & 0.77 & 22\% \\
SigLIP-L & 0.0\% & 29\% & -0.06 & 0.86 & 22\% \\
SigLIP2-L & 0.0\% & 30\% & -0.06 & 0.89 & 20\% \\
PE-core-L & 0.0\% & 21\% & -0.09 & 0.84 & 25\% \\
DINOv3-L & 0.0\% & 14\% & -0.11 & 0.75 & 31\% \\
\bottomrule
\end{tabular}

%% file: sec/estimator_table.tex
\begin{tabular}{lc}
\toprule
estimator & mAP@100 MAE \\
\midrule
raw / evaluate-at-$S$ & 0.038 \\
count (K-R) & 0.038 \\
log-linear & 0.111 \\
log-quadratic & 0.095 \\
power-law & 0.117 \\
log-uniform fit & 0.075 \\
spline survival (PCHIP) & 0.043 \\
Beta-Binomial (emp.\ Bayes) & 0.164 \\
Weibull tail, endpoint$=1$ & 0.012 \\
\textbf{EVT (ours)} $\star$ & \textbf{0.008} \\
\bottomrule
\end{tabular}

%% file: sec/instre_table.tex
\begin{tabular}{lccc}
\toprule
backbone & true@100M & EVT-at-$5\%$ & count-at-$5\%$ \\
\midrule
PE-core-L & 0.613 & 0.607\,($-0.006$) & 0.621\,($+0.008$) \\
DINOv3-L  & 0.547 & 0.540\,($-0.007$) & 0.555\,($+0.008$) \\
SigLIP-L  & 0.532 & 0.528\,($-0.004$) & 0.541\,($+0.009$) \\
SigLIP2-L & 0.460 & 0.454\,($-0.006$) & 0.469\,($+0.009$) \\
DINOv2-L  & 0.355 & 0.352\,($-0.003$) & 0.363\,($+0.008$) \\
\midrule
mean $|$err$|$ & --- & $0.005$ & $0.008$ \\
\bottomrule
\end{tabular}

%% file: sec/fullci_table.tex
\begin{tabular}{lcccc}
\toprule
backbone & true mAP@100M & query-only 95\% & full 95\% (q+samp+fit) & $\sigma_{\mathrm{samp+fit}}$ \\
\midrule
DINOv3-L  & 0.266 & [0.240,\,0.273] & [0.243,\,0.277] & 0.002 \\
DINOv2-L  & 0.154 & [0.140,\,0.166] & [0.139,\,0.167] & 0.003 \\
SigLIP-L  & 0.200 & [0.193,\,0.221] & [0.192,\,0.223] & 0.003 \\
SigLIP2-L & 0.213 & [0.207,\,0.238] & [0.203,\,0.236] & 0.004 \\
PE-core-L & 0.223 & [0.209,\,0.238] & [0.209,\,0.240] & 0.003 \\
\bottomrule
\end{tabular}

%% file: sec/eval5m_table.tex
\begin{tabular}{lccc}
\toprule
backbone & eval-at-5M & EVT-at-5M & true@100M \\
\midrule
DINOv2-L  & 0.230\,($+0.076$) & 0.153\,($-0.001$) & 0.154 \\
SigLIP-L  & 0.324\,($+0.124$) & 0.208\,($+0.008$) & 0.200 \\
SigLIP2-L & 0.341\,($+0.128$) & 0.221\,($+0.008$) & 0.213 \\
PE-core-L & 0.355\,($+0.131$) & 0.225\,($+0.002$) & 0.223 \\
DINOv3-L  & 0.374\,($+0.108$) & 0.260\,($-0.006$) & 0.266 \\
\midrule
mean $|$err$|$ & $0.113$ & $0.005$ & --- \\
\bottomrule
\end{tabular}

%% file: sec/repeated_table.tex
\begin{tabular}{lcccccc}
\toprule
backbone & true & mean pred & MAE & std & worst & cover \\
 & mAP@100M & (5\%) & & & $|$err$|$ & $|$err$|{\le}h$ \\
\midrule
DINOv2-L & 0.154 & 0.153 & 0.002 & 0.002 & 0.007 & 20/20 \\
SigLIP-L & 0.200 & 0.208 & 0.008 & 0.003 & 0.013 & 20/20 \\
SigLIP2-L & 0.213 & 0.221 & 0.008 & 0.003 & 0.012 & 20/20 \\
PE-core-L & 0.223 & 0.225 & 0.002 & 0.003 & 0.008 & 20/20 \\
DINOv3-L & 0.266 & 0.260 & 0.006 & 0.002 & 0.010 & 20/20 \\
\bottomrule
\end{tabular}

%% file: sec/sweep_table.tex
\begin{tabular}{lccccc}
\toprule
fraction & $n$ & draws & mean MAE & worst & mean std \\
 & & (disjoint) & (5 bb) & $|$err$|$ & \\
\midrule
0.5\% & 0.50M & 12 & 0.020 & 0.039 & 0.003 \\
1.0\% & 0.99M & 12 & 0.015 & 0.031 & 0.003 \\
2.0\% & 1.98M & 12 & 0.010 & 0.023 & 0.003 \\
5.0\% & 4.96M & 20 & 0.005 & 0.013 & 0.003 \\
\bottomrule
\end{tabular}

%% file: main.bbl
\begin{thebibliography}{29}
\providecommand{\natexlab}[1]{#1}
\providecommand{\url}[1]{\texttt{#1}}
\expandafter\ifx\csname urlstyle\endcsname\relax
  \providecommand{\doi}[1]{doi: #1}\else
  \providecommand{\doi}{doi: \begingroup \urlstyle{rm}\Url}\fi

\bibitem[Bahri et~al.(2023)Bahri, Zheng, Tay, Metzler, and
  Tomkins]{bahri2020surprise}
Dara Bahri, Che Zheng, Yi~Tay, Donald Metzler, and Andrew Tomkins.
\newblock Surprise: Result list truncation via extreme value theory.
\newblock In \emph{SIGIR}, pp.\  2404--2408, 2023.

\bibitem[Buckley \& Voorhees(2004)Buckley and Voorhees]{buckley2004incomplete}
Chris Buckley and Ellen~M Voorhees.
\newblock Retrieval evaluation with incomplete information.
\newblock In \emph{SIGIR}, 2004.

\bibitem[Carterette et~al.(2006)Carterette, Allan, and
  Sitaraman]{carterette2006minimal}
Ben Carterette, James Allan, and Ramesh Sitaraman.
\newblock Minimal test collections for retrieval evaluation.
\newblock In \emph{SIGIR}, 2006.

\bibitem[Fang et~al.(2024)Fang, Zhan, Ai, Mao, Su, Chen, and
  Liu]{fang2024scaling}
Yan Fang, Jingtao Zhan, Qingyao Ai, Jiaxin Mao, Weihang Su, Jia Chen, and Yiqun
  Liu.
\newblock Scaling laws for dense retrieval.
\newblock In \emph{SIGIR}, 2024.

\bibitem[Furon \& J\'egou(2013)Furon and J\'egou]{furon2013evt}
Teddy Furon and Herv\'e J\'egou.
\newblock Using extreme value theory for image detection.
\newblock Technical Report RR-8244, INRIA, 2013.

\bibitem[Grother \& Phillips(2004)Grother and Phillips]{grother2004models}
Patrick Grother and P.~Jonathon Phillips.
\newblock Models of large population recognition performance.
\newblock In \emph{CVPR}, 2004.

\bibitem[Hawking \& Robertson(2003)Hawking and
  Robertson]{hawking2003collection}
David Hawking and Stephen Robertson.
\newblock On collection size and retrieval effectiveness.
\newblock \emph{Information Retrieval}, 6\penalty0 (1):\penalty0 99--150, 2003.

\bibitem[Hestness et~al.(2017)Hestness, Narang, Ardalani, Diamos, Jun,
  Kianinejad, Patwary, Yang, and Zhou]{hestness2017scaling}
Joel Hestness, Sharan Narang, Newsha Ardalani, Gregory Diamos, Heewoo Jun,
  Hassan Kianinejad, Md. Mostofa~Ali Patwary, Yang Yang, and Yanqi Zhou.
\newblock Deep learning scaling is predictable, empirically.
\newblock \emph{arXiv:1712.00409}, 2017.

\bibitem[Jin et~al.(2021)Jin, Li, Mudrak, Gao, and Liu]{jin2021estimating}
Ruoming Jin, Dong Li, Benjamin Mudrak, Jing Gao, and Zhi Liu.
\newblock On estimating recommendation evaluation metrics under sampling.
\newblock In \emph{AAAI}, 2021.

\bibitem[Kaplan et~al.(2020)Kaplan, McCandlish, Henighan, Brown, Chess, Child,
  Gray, Radford, Wu, and Amodei]{kaplan2020scaling}
Jared Kaplan, Sam McCandlish, Tom Henighan, Tom~B. Brown, Benjamin Chess, Rewon
  Child, Scott Gray, Alec Radford, Jeffrey Wu, and Dario Amodei.
\newblock Scaling laws for neural language models.
\newblock \emph{arXiv:2001.08361}, 2020.

\bibitem[Kemelmacher-Shlizerman et~al.(2016)Kemelmacher-Shlizerman, Seitz,
  Miller, and Brossard]{megaface2016}
Ira Kemelmacher-Shlizerman, Steven~M. Seitz, Daniel Miller, and Evan Brossard.
\newblock The {MegaFace} benchmark: 1 million faces for recognition at scale.
\newblock In \emph{CVPR}, 2016.

\bibitem[Kordopatis-Zilos et~al.(2025)Kordopatis-Zilos, Stojni\'c, Manko,
  \v{S}uma, Ypsilantis, Efthymiadis, Laskar, Matas, Chum, and
  Tolias]{ilias2025}
Giorgos Kordopatis-Zilos, Vladan Stojni\'c, Anna Manko, Pavel \v{S}uma,
  Nikolaos-Antonios Ypsilantis, Nikos Efthymiadis, Zakaria Laskar, Ji\v{r}\'i
  Matas, Ond\v{r}ej Chum, and Giorgos Tolias.
\newblock {ILIAS}: Instance-level image retrieval at scale.
\newblock In \emph{CVPR}, 2025.

\bibitem[Krichene \& Rendle(2020)Krichene and Rendle]{krichene2020sampled}
Walid Krichene and Steffen Rendle.
\newblock On sampled metrics for item recommendation.
\newblock In \emph{KDD}, 2020.

\bibitem[Lacoste et~al.(2019)Lacoste, Luccioni, Schmidt, and
  Dandres]{lacoste2019co2}
Alexandre Lacoste, Alexandra Luccioni, Victor Schmidt, and Thomas Dandres.
\newblock Quantifying the carbon emissions of machine learning.
\newblock \emph{arXiv:1910.09700}, 2019.

\bibitem[Li et~al.(2023)Li, Jin, Liu, Ren, Gao, and Liu]{li2023reliable}
Dong Li, Ruoming Jin, Zhenming Liu, Bin Ren, Jing Gao, and Zhi Liu.
\newblock Towards reliable item sampling for recommendation evaluation.
\newblock In \emph{AAAI}, 2023.

\bibitem[Madigan et~al.(2003)Madigan, Vardi, and
  Weissman]{madigan2003retrieval}
David Madigan, Yehuda Vardi, and Ishay Weissman.
\newblock On retrieval properties of samples of large collections.
\newblock In \emph{AISTATS}, 2003.

\bibitem[Madigan et~al.(2006)Madigan, Vardi, and Weissman]{madigan2006evt}
David Madigan, Yehuda Vardi, and Ishay Weissman.
\newblock Extreme value theory applied to document retrieval from large
  collections.
\newblock \emph{Information Retrieval}, 9:\penalty0 273--294, 2006.

\bibitem[{NIST}(2022)]{nistfrvt}
{NIST}.
\newblock Face recognition vendor test ({FRVT}): Relating 1:1 and 1:n false
  positive rates.
\newblock Technical report, National Institute of Standards and Technology,
  2022.
\newblock FRVT report card.
  \url{https://pages.nist.gov/frvt/reports/demographics/implications_for_1N.pdf}.

\bibitem[Polo et~al.(2024)Polo, Weber, Choshen, Sun, Xu, and
  Yurochkin]{polo2024tiny}
Felipe~Maia Polo, Lucas Weber, Leshem Choshen, Yuekai Sun, Gongjun Xu, and
  Mikhail Yurochkin.
\newblock {tinyBenchmarks}: Evaluating {LLMs} with fewer examples.
\newblock In \emph{ICML}, 2024.

\bibitem[Radenovi\'c et~al.(2018)Radenovi\'c, Iscen, Tolias, Avrithis, and
  Chum]{radenovic2018revisiting}
Filip Radenovi\'c, Ahmet Iscen, Giorgos Tolias, Yannis Avrithis, and Ond\v{r}ej
  Chum.
\newblock Revisiting {Oxford} and {Paris}: Large-scale image retrieval
  benchmarking.
\newblock In \emph{CVPR}, 2018.

\bibitem[Rocher et~al.(2025)Rocher, Hendrickx, and
  de~Montjoye]{rocher2025scaling}
Luc Rocher, Julien~M Hendrickx, and Yves-Alexandre de~Montjoye.
\newblock A scaling law to model the effectiveness of identification
  techniques.
\newblock \emph{Nature Communications}, 2025.

\bibitem[Scheirer et~al.(2011)Scheirer, Rocha, Micheals, and
  Boult]{scheirer2011meta}
Walter~J Scheirer, Anderson Rocha, Ross~J Micheals, and Terrance~E Boult.
\newblock Meta-recognition: The theory and practice of recognition score
  analysis.
\newblock \emph{IEEE TPAMI}, 33\penalty0 (8), 2011.

\bibitem[Slavutsky \& Benjamini(2021)Slavutsky and
  Benjamini]{slavutsky2021predicting}
Yuli Slavutsky and Yuval Benjamini.
\newblock Predicting classification accuracy when adding new unobserved
  classes.
\newblock In \emph{ICLR}, 2021.

\bibitem[Vivek et~al.(2024)Vivek, Ethayarajh, Yang, and Kiela]{vivek2024anchor}
Rajan Vivek, Kawin Ethayarajh, Diyi Yang, and Douwe Kiela.
\newblock Anchor points: Benchmarking models with much fewer examples.
\newblock In \emph{EACL}, 2024.

\bibitem[Wang \& Jiang(2015)Wang and Jiang]{instre2015}
Shuang Wang and Shuqiang Jiang.
\newblock {INSTRE}: A new benchmark for instance-level object retrieval and
  recognition.
\newblock \emph{ACM Transactions on Multimedia Computing, Communications, and
  Applications}, 11\penalty0 (3), 2015.

\bibitem[Weyand et~al.(2020)Weyand, Araujo, Cao, and Sim]{weyand2020gldv2}
Tobias Weyand, Andre Araujo, Bingyi Cao, and Jack Sim.
\newblock {Google Landmarks Dataset v2}: A large-scale benchmark for
  instance-level recognition and retrieval.
\newblock In \emph{CVPR}, 2020.

\bibitem[Yilmaz \& Aslam(2006)Yilmaz and Aslam]{yilmaz2006infap}
Emine Yilmaz and Javed~A Aslam.
\newblock Estimating average precision with incomplete and imperfect judgments.
\newblock In \emph{CIKM}, 2006.

\bibitem[Yilmaz et~al.(2008)Yilmaz, Kanoulas, and Aslam]{yilmaz2008statap}
Emine Yilmaz, Evangelos Kanoulas, and Javed~A Aslam.
\newblock A simple and efficient sampling method for estimating {AP} and
  {NDCG}.
\newblock In \emph{SIGIR}, 2008.

\bibitem[Zheng et~al.(2018)Zheng, Achanta, and
  Benjamini]{zheng2018extrapolating}
Charles Zheng, Rakesh Achanta, and Yuval Benjamini.
\newblock Extrapolating expected accuracies for large multi-class problems.
\newblock \emph{Journal of Machine Learning Research}, 19\penalty0
  (65):\penalty0 1--30, 2018.

\end{thebibliography}
